\documentclass{article}

\usepackage{microtype}
\usepackage{graphicx}
\usepackage{subcaption}
\usepackage{booktabs} 

\usepackage{hyperref}

\usepackage[preprint]{icml2026}

\usepackage{amsmath}
\usepackage{amssymb}
\usepackage{mathtools}
\usepackage{amsthm}

\usepackage[capitalize,noabbrev]{cleveref}

\theoremstyle{plain}
\newtheorem{theorem}{Theorem}[section]
\newtheorem{proposition}[theorem]{Proposition}

\theoremstyle{definition}

\theoremstyle{remark}

\usepackage[textsize=tiny]{todonotes}

\usepackage{xcolor}         
\definecolor{mycyan}{RGB}{0,204,204}
\definecolor{gold}{RGB}{204,102,20}
\definecolor{mygreen}{RGB}{47, 166, 36}

\DeclareMathOperator*{\argmin}{arg\,min}

\usepackage{placeins}

\icmltitlerunning{Learning Policy Representations for Steerable Behavior Synthesis}

\begin{document}

\twocolumn[
  \icmltitle{Learning Policy Representations for Steerable Behavior Synthesis}



  \icmlsetsymbol{equal}{*}

  \begin{icmlauthorlist}
    \icmlauthor{Beiming Li}{penn}
    \icmlauthor{Sergio Rozada}{urjc}
    \icmlauthor{Alejandro Ribeiro}{penn}
  \end{icmlauthorlist}

  \icmlaffiliation{penn}{Department of Electrical and Systems Engineering, University of Pennsylvania, Philadelphia, USA}
  \icmlaffiliation{urjc}{Department of Signal Theory and Communications, King Juan Carlos University, Madrid, Spain}

  \icmlcorrespondingauthor{Beiming Li}{beimingl@seas.upenn.edu}

  \icmlkeywords{Policy Representation, Reinforcement Learning, Contrastive Learning, Behavior Synthesis}

  \vskip 0.3in
]



\printAffiliationsAndNotice{}  

\begin{abstract}

Given a Markov decision process (MDP), we seek to learn representations for a range of policies to facilitate behavior steering at test time. As policies of an MDP are uniquely determined by their occupancy measures, we propose modeling policy representations as expectations of state-action feature maps with respect to occupancy measures.  We show that these representations can be approximated uniformly for a range of policies using a set-based  architecture. Our model encodes a set of state-action samples into a latent embedding, from which we decode both the policy and its value functions corresponding to multiple rewards. We use variational generative approach to induce a smooth latent space, and further shape it with contrastive learning so that latent distances align with differences in value functions. This geometry permits gradient-based optimization directly in the latent space. Leveraging this capability, we solve a novel behavior synthesis task, where policies are steered to satisfy previously unseen value function constraints without additional training.

\end{abstract}

\section{Introduction}
\label{sec::intro}

Policies are central to Markov decision processes (MDPs), as they define the behavior of the agent \cite{puterman1990markov}. In many settings, it is desirable to compare, share, or reuse policies across tasks or agents. However, policies are functions over state spaces, which makes them difficult to manipulate directly.
Policy representations address this challenge by mapping policies to fixed-dimensional vectors that capture essential behavioral properties. This shift from functions to vectors has proven effective in domains such as multi-agent games for opponent modeling~\cite{hong2017deep, grover2018learning} and meta-learning for policy transfer across tasks~\cite{Hausman2018LearningAE, rakelly2019efficient}.

Despite these successes, learning general policy representations remains a challenge, as existing approaches are often tied to specific downstream applications rather than modeling general policy behavior. For example, in meta-learning, latent variables are primarily used for task inference \cite{Hausman2018LearningAE, rakelly2019efficient}. Furthermore, existing optimization methods tend to produce unstructured latent spaces. 
Works relying on contrastive objectives \cite{hoffer2015deep} favor discrimination over continuity, leading to fragmented representations \cite{grover2018learning, zha2023rank}. 
In contrast, variational generative approaches can promote smooth latent spaces but fail to align latent geometry with behavioral attributes, so proximity in the latent space does not imply policy similarity \cite{wang2017robust}. 
Consequently, existing policy representations have limited practical utility, especially for methods that require optimization directly in representation space \cite{lecun2022path}.

To address these limitations, we propose a general policy representation model that encodes policies as expectations of state–action features under their occupancy measures, extending successor features \cite{barreto2017successor} to a principled characterization of policy behavior.
We derive a practical estimator based on sampled state-action pairs and show that set-based architectures can approximate the ideal representation up to sampling error. 
Building on this formulation, we learn a structured latent space that supports policy reconstruction while explicitly aligning latent distances with value function differences, yielding a representation amenable to gradient-based search.
We exploit this structure to enable a novel behavior synthesis task, where new policies satisfying prescribed value-function constraints are obtained by optimization in latent space at test time, without retraining or extra environmental interactions. Our contributions are
\begin{itemize}
    \item[\textbf{C1}] We introduce a  policy representation model based on expected feature maps, together with a practical estimator from sampled state–action pairs.
    \item[\textbf{C2}] We learn policy-discriminative representations endowed with a value-function-aligned geometry through a combination of generative and contrastive learning.
    \item[\textbf{C3}] We enable test-time behavior synthesis by navigating the learned policy representation space through gradient-based optimization.
\end{itemize}

\section{Policy Representations}
\label{sec::model}

This section introduces the proposed policy representation model.
Section~\ref{sec::prelims} presents preliminaries,
Section~\ref{sec::repr_model} introduces the model and its practical estimator, and 
Section~\ref{sec::repr_arch} describes the resulting architecture.

\subsection{Preliminaries}
\label{sec::prelims}
We consider a discounted multi-objective MDP defined by the tuple $(\mathcal{S}, \mathcal{A}, \mathcal{P}, r, \gamma, \mu_0)$. Here, $\mathcal{S} \subset \mathbb{R}^{d_s}$ and $\mathcal{A} \subset \mathbb{R}^{d_a}$ denote the bounded state and action spaces, $\mathcal{P}(\cdot|s,a)$ is the transition kernel, $r: \mathcal{S} \times \mathcal{A} \to \mathbb{R}^K$ is the vector-valued reward function, $\gamma \in [0,1)$ is the discount factor, and $\mu_0$ is the initial state distribution. Each policy $\pi$ induces a unique discounted state--action occupancy measure $d_\pi(s,a) = (1-\gamma)\sum_{t=0}^{\infty} \gamma^t \mathbb{P}_\pi(s_t = s, a_t = a)$, where $\mathbb{P}_\pi$ denotes the probability law induced by $\pi$ and the transition dynamics. The discounted return for the $k$-th objective is a random variable $R^{(k)} = \sum_{t=0}^{\infty} \gamma^t r^{(k)}(s_t,a_t)$, whose realizations arise from trajectories $\{(s_t,a_t)\}_{t\ge0}$ collected by executing $\pi$. In this work, we define value function as the expected cumulative reward, $v_\pi^{(k)} = \mathbb{E}_{\pi}\!\left[\sum_{t=0}^{\infty} \gamma^t r^{(k)}(s_t,a_t)\right]$.
For compactness, we define the return vector $R=[R^{(1)},\ldots,R^{(K)}]^\top$ and the value vector $v_\pi=[v^{(1)}_\pi,\ldots,v^{(K)}_\pi]^\top$.

\subsection{Policy representation model}\label{sec::repr_model}

We aim to represent policies as fixed-dimensional vectors. 
Since, under fixed dynamics and initial distribution, the occupancy measure $d_\pi$ uniquely determines the policy $\pi$ up to measure-zero sets, it provides a natural basis for constructing policy representations.
Therefore, we propose mapping each policy $\pi$ to the expected value of an state--action feature map $f$ under its state--action occupancy measure $d_\pi$
\begin{equation}
    h_\pi
    :=
    \mathbb{E}_{ d_\pi}\left[f(s,a)\right] \in \mathcal{H},
    \label{eq:hpi_def}
\end{equation}
where $\mathcal{H}$ denotes the policy representation space.
The feature map $f$ embeds state--action pairs into a latent vector, while $d_\pi$ aggregates these embeddings. 
Since the dynamics are fixed, variations in the resulting representation arise solely from changes in the policy.
This model extends the successor-feature representation framework \cite{barreto2017successor} from state-action pairs to policies, yielding reward-agnostic representations as well. The feature map $f$ may vary depending on application, and different choices induce latent spaces $\mathcal H$ with different properties. 
In this work, however, we do not fix $f$ explicitly; instead, we characterize the representation by enforcing general desirable properties directly on $\mathcal H$, as detailed in later sections. 
For theoretical analysis, we assume that $f$ is well structured in the sense that it belongs to a reproducing kernel Hilbert space (RKHS) associated with a smooth kernel $\kappa$. 
This assumption ensures that nearby state--action pairs admit similar feature representations, without committing to a specific choice of $f$.

The representation in \eqref{eq:hpi_def} cannot be computed since we usually do not have access to $d_\pi$, but only to finite samples generated by executing the policy $\pi$. 
Therefore,  \eqref{eq:hpi_def} must be approximated from a dataset $\mathcal{D}_\pi=\{(s_n,a_n)\}_{n=1}^N$ of state--action pairs, that are assumed to be i.i.d. 
Since $f$ is assumed to lie in an RKHS, a natural estimator of $h_\pi$ is provided by the control functional construction of \citet[Lemma~3]{oates2017control}, which can be summarized as
\begin{equation}
    g(\mathcal{D}_\pi)
    =
    \sum_{n=1}^N w(s_n, a_n \mid \mathcal{D}_\pi)\, f(s_n,a_n).
    \label{eq:hpi_quad}
\end{equation}
The weights $w(s_n, a_n \mid \mathcal{D}_\pi)\in\mathbb{R}$ depend on kernel-based similarities among the sampled state--action pairs (see Appendix \ref{app:control-functional} for details). Under suitable regularity conditions, the resulting sample-based estimator achieves an error rate of order $O(N^{-1})$ \citep[Theorem 2]{oates2017control}, improving upon the standard Monte Carlo rate $O(N^{-1/2})$.

In practice, neither these weights $w$ nor the feature map $f$ are directly available and must therefore be approximated.
We address this limitation by learning the parametric model
\begin{equation}
    \tilde{h}_\pi := g_\theta (\mathcal{D}_\pi)
    =
    \sum_{n=1}^N w_{\theta_0}(s_n,a_n | \mathcal{D}_\pi)\, f_{\theta_1}(s_n,a_n),
    \label{eq:hpi_learned}
\end{equation}
where $\theta:=\{\theta_0,\theta_1\}$ and $\tilde h_\pi \in \mathcal{H}_\theta$. 
The estimator $w$ is a permutation-invariant function that maps a dataset $\mathcal{D}_\pi$ to a scalar weight for a fixed state–action pair $(s,a)$.
Therefore, it is natural to approximate it using a set-based architecture, e.g., DeepSets \cite{zaheer2017deep}.
In contrast, the feature map $f$ acts pointwise on  state–action pairs and is naturally parameterized by a standard neural network, e.g., multilayer perceptron (MLP).
Since DeepSets are universal approximators of permutation-invariant set functions \cite{zaheer2017deep} and MLPs are universal approximators of pointwise functions \cite{cybenko1989approximation}, this parameterization allows the learned estimator to approximate the ideal sample-based estimator $g(\mathcal{D_\pi})$ arbitrarily well. 
This is formalized in the following Theorem, with proof given in Appendix~\ref{app::thm_existence}.

\begin{theorem}
\label{thm:existence_vs_tilde}
Assume: (i) $\Omega=\mathcal S\times\mathcal A$ is compact;
(ii) the kernel $\kappa$ is bounded and twice continuously differentiable; and
(iii) for all $\pi \in \Pi$, the  occupancy measure $d_\pi$ admits a continuously differentiable density on $\Omega$ which vanishes on $\partial\Omega$.
Then, for any $\varepsilon>0$, there exist parameters $\theta_0$ and $\theta_1$
such that the following holds for all $\pi\in\Pi$

\begin{equation}
\label{eq:existence_vs_tilde}
    \|\tilde h_\pi - h_\pi\|
    \le
    C N^{-1}
    + \varepsilon.
\end{equation}
\end{theorem}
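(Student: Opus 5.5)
The plan is to split the error with the triangle inequality, inserting the ideal control-functional estimator $g(\mathcal D_\pi)$ of \eqref{eq:hpi_quad}:
\begin{equation*}
\|\tilde h_\pi - h_\pi\| \le \|g_\theta(\mathcal D_\pi) - g(\mathcal D_\pi)\| + \|g(\mathcal D_\pi) - h_\pi\| .
\end{equation*}
The second term is statistical and will give $CN^{-1}$. The first is an approximation term, and I will make it at most $\varepsilon$ uniformly in $\pi$ by choosing $\theta_0,\theta_1$.

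For the statistical term, I will invoke \citet[Theorem~2]{oates2017control} coordinatewise on $f=(f_1,\dots,f_d)$, with each $f_j$ in the RKHS of $\kappa$. Assumptions (i)--(iii) are exactly the regularity conditions of that theorem. Compactness and the $C^2$ bounded kernel make the Stein kernel $\kappa_0$ built from $\kappa$ and $\nabla\log d_\pi$ well defined and bounded. The boundary vanishing of $d_\pi$ kills the boundary term in the integration by parts, so $\mathbb E_{d_\pi}[\mathcal L\phi]=0$ for the Stein operator $\mathcal L$. This yields an error of order $N^{-1}$, understood in root-mean-square over the i.i.d.\ draw of $\mathcal D_\pi$, as in Oates et al.

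The point requiring care is that the constant must not depend on $\pi$. In Oates et al.\ it depends on $\|f_j\|_{\mathcal H_\kappa}$, on $\sup\kappa_0$, and on norms of the score $\nabla\log d_\pi$. I will therefore take the supremum of these quantities over $\pi\in\Pi$. This is finite if the densities and their scores are uniformly bounded on $\Omega$, which I read as implicit in (iii); if needed, I will state it explicitly as a uniformity assumption on the family $\Pi$. The bound then holds with a single $C$.

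For the approximation term, I will write
\begin{equation*}
g_\theta - g=\sum_n \bigl(w_{\theta_0}-w\bigr)f_{\theta_1} + \sum_n w\,\bigl(f_{\theta_1}-f\bigr).
\end{equation*}
I will bound this by $N\sup|w_{\theta_0}-w|\sup\|f_{\theta_1}\| + N\sup|w|\sup\|f_{\theta_1}-f\|$, with each supremum taken over the compact domain.

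\begin{itemize}
\item $f$ is continuous on compact $\Omega$, since RKHS functions of a continuous kernel are continuous. So \citet{cybenko1989approximation} gives an MLP with $\sup_\Omega\|f_{\theta_1}-f\|\le\delta$, and hence $\|f_{\theta_1}\|\le\|f\|_\infty+\delta$.
\item The weight map $(s,a,\mathcal D)\mapsto w(s,a\mid\mathcal D)$ from Oates et al.\ is an explicit rational function of kernel evaluations: it involves inverting the Gram matrix $K_0+\lambda I$, and $K_0$ includes score terms. It is therefore continuous and permutation-invariant in $\mathcal D$. It is also bounded on $\Omega\times\Omega^N$, because the regularized Gram matrix has eigenvalues at least $\lambda>0$.
\item DeepSets universality \citep{zaheer2017deep} then gives $\theta_0$ with $\sup|w_{\theta_0}-w|\le\delta$.
\end{itemize}
Choosing $\delta$ small relative to $\varepsilon/N$ finishes this term. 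Since $N$ is fixed in the statement, this is legitimate.

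I expect the main obstacle to be that $w$ depends on $\pi$ through the score $\nabla\log d_\pi$, while the network sees only $\mathcal D_\pi$. I will first try to treat the score as a function of the samples: within a smooth, compact family $\Pi$, the map $\pi\mapsto\nabla\log d_\pi$ is continuous, and $\pi$ is identified from the data. If that fails, I will use the score-free weights that Oates et al.\ obtain by estimating the score, or extend the input of the set function to include the score. In either case the approximation target remains a continuous permutation-invariant function on a compact set, so universality applies.
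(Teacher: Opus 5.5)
\textbf{There is a genuine gap, at the obstacle you flagged yourself.} Your outline is the paper's own: the same triangle inequality around the control-functional estimator $g(\mathcal D_\pi)$, Theorem~2 of \citet{oates2017control} for the sampling term, and Cybenko plus DeepSets for the approximation term. Your cross-term split differs from the paper's only cosmetically. You need bounds on $w$ and $f_{\theta_1}$, whereas the paper postulates a bound $C_f$ on the learned weights.

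None of your three remedies closes the gap. The weights in \eqref{eq:oates_lemma3_adapted} are built from the Stein kernel $k_0$, which contains $\nabla\log d_\pi$. So $w$ is a function of $(s,a)$, $\mathcal D$ \emph{and} $\pi$, while $w_{\theta_0}$ sees only $\mathcal D_\pi$.

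\emph{Treating the score as a function of the samples} fails because a finite sample does not identify $\pi$. Two policies whose occupancy densities are positive on a common open set can produce the same $\mathcal D$. Hence $\mathcal D\mapsto\nabla\log d_\pi$ is not a function, even on a smooth compact $\Pi$.

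\emph{Estimating the score} is not covered by Oates et al., whose rate uses the exact score. It also cannot work in principle. Any map $\mathcal D_\pi\mapsto\tilde h_\pi$ is an estimator of $\mathbb E_{d_\pi}[f]$ from $N$ i.i.d.\ draws. Apply Le Cam's two-point argument to a smooth one-parameter family $d_{\pi_t}\subset\Pi$ along which $\mathbb E_{d_{\pi_t}}[f]$ varies, with $t\asymp N^{-1/2}$. This forces worst-case error at least $cN^{-1/2}$, which exceeds $CN^{-1}+\varepsilon$ once $N$ is large and $\varepsilon$ small, provided $C$ does not depend on $N$.

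\emph{Feeding the score to the set function} succeeds only by changing the estimator \eqref{eq:hpi_learned}, so it proves a different theorem. Short of restricting $\Pi$ severely, you must either give the weight network the score explicitly, or settle for the Monte Carlo rate $CN^{-1/2}$. For the latter, take $w\equiv 1/N$; then only $f$ needs approximating.

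The paper's proof does not resolve this either. It cites Remark~5 of Oates et al., which only gives independence of the weights from $f$. It then approximates $w(s,a\mid\mathcal D)$ as a function of $(s,a)$ and $\mathcal D$ alone, overlooking the score inside $k_0$. Your argument coincides with the paper's once that premise is granted, and that premise is exactly what fails.

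\textbf{A second, related flaw concerns your uniformity fix.} Assuming bounded scores on $\Omega$ contradicts hypothesis (iii). Since $d_\pi$ vanishes on $\partial\Omega$, $\log d_\pi\to-\infty$ there. The mean value theorem along a segment ending at the boundary then shows $\nabla\log d_\pi$ is unbounded.

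For a Gaussian $\kappa$, for instance, $k_0(x,x)$ equals a constant plus $\|\nabla\log d_\pi(x)\|^2$. So the entries of $K_0$, and generically of $K_{1,0}$, blow up as samples approach $\partial\Omega$. Your eigenvalue floor controls $(K_0+\lambda mI)^{-1}$ but not $K_{1,0}(K_0+\lambda mI)^{-1}$. Hence $w$ is in general neither bounded nor continuously extendable to the compact set $\Omega\times\Omega^N$ on which you (and the paper) invoke universality. Whatever uniformity over $\Pi$ you impose cannot be a sup-bound on scores.

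A minor point: the Oates estimator depends on the $D_0/D_1$ split, so it is permutation-invariant only after averaging over splits. That averaging is harmless for a mean-square rate, by convexity.
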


In a nutshell, Theorem~\ref{thm:existence_vs_tilde} shows that set-based models can approximate the policy representation in~\eqref{eq:hpi_def} up to a sampling error proportional to $N^{-1}$, provided the feature map $f$ satisfies mild regularity conditions.
Since policies are represented through samples, this guarantee holds uniformly over a class of well-behaved policies. 
The total error decomposes into a sampling term, controlled by the control functional construction of \citet{oates2017control}, and an approximation term $\varepsilon$ arising from learning both the weights and the feature map.
The proof in Appendix~\ref{app::thm_existence} provides a simple construction of $w_{\theta_0}$ and $f_{\theta_0}$ using Deep Sets and MLPs, respectively.
While this construction is sufficient for establishing existence, it may be overly restrictive in practical settings. 
More generally, the representation in~\eqref{eq:hpi_learned} can be interpreted as a set-based attention pooling operator, where the pointwise map $f_{\theta_1}$ acts as the value projection for attention, and the invariant model $w_{\theta_0}(\cdot \mid \mathcal D_\pi)$ defines data-dependent attention weights. 
This perspective naturally accommodates more expressive attention-based architectures as refinements of the underlying weighted-sum construction.

\subsection{Architecture}
\label{sec::repr_arch}

So far, we have introduced a model that embeds a policy $\pi$ into a representation $ h_\pi$.
To ensure the representation is useful across tasks, it must retain information about the behavior of the agent.
Such behavior is naturally characterized by the policy $\pi$ itself and by its associated value functions $v_\pi$, which capture the consequences of that behavior. 
This motivates the design of a latent space $\mathcal H$ that supports the recovery of both $\pi$ and $v_\pi$ from $ h_\pi$.

We introduce a parametric policy decoder $\pi_\phi(\cdot \mid s, \tilde{h}_\pi)$ that models the action distribution conditioned on state $s$ and policy representation $\tilde{h}_\pi$. 
To handle multiple reward functions, we further map the policy representation space $\mathcal{H}_\theta$ into a collection of latent spaces $\{\mathcal{Z}_{\psi^{(k)}}\}_{k=1}^K$, designed to capture task-specific structures. The mapping is implemented via projection heads $\{u_{\psi^{(k)}}\}_{k=1}^K$ that yield task-specific embeddings $z_\pi^{(k)} := u_{\psi^{(k)}}(\tilde{h}_\pi) \in \mathcal{Z}_{\psi^{(k)}}$, which are subsequently used by value approximators $v_{\xi^{(k)}}(z_\pi^{(k)})$ to predict the corresponding value functions. We denote the projection and value parameters as $\psi=\{\psi^{(k)}\}_{k=1}^K$ and $\xi=\{\xi^{(k)}\}_{k=1}^K$ respectively. 
Together, these components define an encoder--decoder architecture, illustrated in Figure~\ref{fig:architecture}.

\begin{figure}[htbp]
    \centering
    \includegraphics[width=0.9\columnwidth]{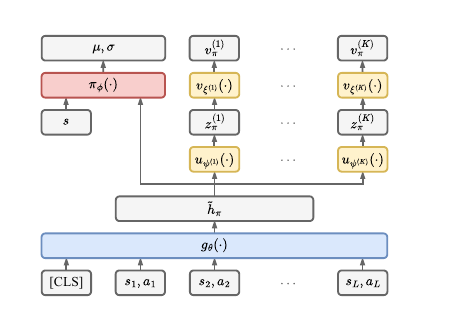}
    \caption{ The model consists of (1) an encoder $g_{\theta}$ that maps sets of state-action pairs to policy representations $\tilde{h}_\pi$; (2) a set of projectors $\{u_{\psi^{(k)}}\}_{k=1}^K$ that map $\tilde{h}_\pi$ to task-specific embeddings $\{z_\pi^{(k)}\}_{k=1}^K$ and a set of value regressors $\{v_{\xi^{(k)}}\}_{k=1}^K$; (3) a policy $\pi_\phi$ that outputs actions given $\tilde{h}_\pi$ and a query state $s$.
    }
    \label{fig:architecture}
\end{figure}

\section{Learning Policy Representations}
\label{sec::problem}

This section describes how to learn policy representations. 
Section~\ref{sec::problem_formulation} introduces the available data and the   learning objective, Sections~\ref{sec::policy_loss} and~\ref{sec::value_loss} detail the policy and value losses, and Section~\ref{sec::practical_considerations} discusses practical considerations.

\subsection{Problem formulation}
\label{sec::problem_formulation}

We assume access to a collection of datasets
\[
    \mathcal{D} = \{\mathcal{D}_{\pi_p}\}_{p=1}^P, 
    \qquad 
    \mathcal{D}_{\pi_p}=\{(s_n,a_n)\}_{n=1}^{N_p},
\]
each obtained by executing one of $P$ policies and collecting $N_p$ state--action samples. 
These samples may be generated through online interaction, expert demonstrations, random policies, or any mixture thereof.
For each policy $\pi_p$, we also observe $M_p$ returns resulting from its execution,
\[
    \mathcal{R} = \{\mathcal{R}_{\pi_p}\}_{p=1}^P, 
    \qquad 
    \mathcal{R}_{\pi_p} = \{R_m\}_{m=1}^{M_p},
\]
where each $R_m$ is a vector of returns across all tasks.
These returns provide supervision signals capturing the consequences of policy behavior under different objectives.
For training, we repeatedly sample context sets from each policy dataset. 
Specifically, for a policy $\pi_p$, we sample a subset of state--action pairs to form a context set $\mathcal{C} \sim \mathcal{D}_{\pi_p}$ and associate it with a single return $R \sim \mathcal{R}_{\pi_p}$. 
Repeating this process yields a collection of training pairs $\{(\mathcal{C}_i, R_i)\}_{i=1}^I$, where each context set provides a finite representation of a policy and the associated return is the supervision signal.

As discussed in the previous section, our objective is to learn the parameters of an encoder--decoder architecture that maps sampled policy executions into a latent space from which both policies and value functions can be reconstructed. Beyond reconstruction, we seek a latent space that supports gradient-based optimization for searching policies that satisfy value-based criteria.
To this end, we endow the latent space $\mathcal{H}_\theta$ with a geometry in which distances reflect differences in value functions, ensuring that local movements in $\mathcal{H}_\theta$ correspond to meaningful changes in policy performance. 
Thus, we consider the following optimization
\begin{equation}
    \label{eq::general_problem}
    \argmin_{\theta, \phi, \psi, \xi} \;\; \ell_\pi(\theta, \phi) + \ell_v(\theta, \psi, \xi),
\end{equation}
where $\ell_\pi(\theta,\phi)$ enforces policy reconstruction, and $\ell_v(\theta,\psi,\xi)$ enforces value reconstruction while inducing a value-aligned geometry on $\mathcal{H}_\theta$.
The following two sections detail the construction of these losses.


\subsection{Policy loss}
\label{sec::policy_loss}

We implement policy encoding and decoding with variational autoencoder (VAE) framework \cite{kingma2013auto}. 
Rather than producing a point estimate, the encoder $g_\theta$ maps a context set $\mathcal{C}_i$ to a variational posterior distribution $q_\theta(\cdot \mid \mathcal{C}_i)$. 
The decoder acts as a conditional stochastic policy: given a query state $s$ and a latent policy representation $\tilde{h}_i \sim q_\theta(\cdot \mid \mathcal{C}_i)$, it outputs the mean  and standard deviation of a diagonal Gaussian distribution over actions.
We minimize a $\beta$-VAE objective \cite{higgins2017beta},
\begin{align}
    \ell_{\pi}(\theta, \phi) = &- \sum_{i=1}^I  \mathbb{E}_{ q_\theta(\cdot \mid \mathcal{C}_i)} \left[ \log \pi_\phi(a \mid s, \tilde{h}_i) \right] \\
    \notag
    & + \beta \sum_{i=1}^I  D_{\mathrm{KL}} \big( q_\theta(\cdot \mid \mathcal{C}_i) \;\|\; \mathcal{N}(0, I) \big).
\end{align}
The reconstruction term ensures that distinct policies admit distinct representations, while the $\beta$-weighted KL regularization promotes a smooth and continuous latent space, preventing fragmentation of the representation manifold.


\subsection{Value function loss}
\label{sec::value_loss}

VAEs have the capacity to imitate diverse policy behaviors, and its variational objective promotes smoothness in the latent space $\mathcal{H}_\theta$ \cite{wang2017robust,duan2017one}.
However, generative modeling alone does not organize the latent space according to the long-term consequences of policy behavior, which are encoded in the value functions. 
Since a single policy may be evaluated under multiple rewards, we introduce task-specific latent spaces $\{\mathcal{Z}_{\psi^{(k)}}\}_{k=1}^K$ via projections from the shared space $\mathcal{H}_\theta$.

Beyond value reconstruction, we seek to endow the shared latent space $\mathcal{H}_\theta$ with a value-function–aligned geometry. 
To this end, we apply contrastive learning in each task-specific latent space $\mathcal{Z}_{\psi^{(k)}}$ and design the projections $u_{\psi^{(k)}}$ to transfer the induced structure back to $\mathcal{H}_\theta$, yielding
\begin{equation}
    \label{eq::value-loss}
    \ell_v(\theta, \psi, \xi)
    =
    \sum_{k=1}^K
    \alpha \ell_v^0(\theta, \psi^{(k)})
    +
    \zeta \ell_v^1(\psi^{(k)})
    +
    \ell_v^2(\xi^{(k)}),
\end{equation}
where $\alpha$ and $\zeta$ are weighting hyper-parameters, with selections detailed in Appendix~\ref{appendix:hyperparameters}. The term $\ell_v^0$ is a contrastive loss defined on $\mathcal{Z}_{\psi^{(k)}}$, and $\ell_v^1$ regularizes the projections to preserve this structure in the shared space $\mathcal{H}_\theta$. 
Additionally, $\ell_v^2$ corresponds to a value-function reconstruction loss, which is particularly relevant for the behavior synthesis problem considered in this work. 
In practice, we optimize $\ell_v^2$ only with respect to the value regressors parameters $\xi$, which can be learned as a downstream task.

\noindent \textbf{Contrastive loss.} We adopt the Rank--N--Contrast (RNC) loss \cite{zha2023rank}, which enforces distance relationships between representations based on task-relevant scalar quantities. 
In our setting, these quantities are  returns, which are stochastic realizations of the value function.
Given a context set $\mathcal{C}_i$, we obtain a policy representation $\tilde{h}_i \sim q_\theta(\cdot \mid \mathcal{C}_i) \in \mathcal{H}_\theta$, which is then projected into a task-specific latent space, yielding $z_i\smash{^{(k)}} \in \mathcal{Z}_{\psi^{(k)}}$. 
Each embedding $z_i\smash{^{(k)}}$ is associated with a scalar return $R_i\smash{^{(k)}}$ corresponding to task $k$. 
Given another embedding $z_j\smash{^{(k)}}$ with associated return $R_j\smash{^{(k)}}$, we define: 
i) the similarity between embeddings as {$s_{i,j}\smash{^{(k)}} := -\| z_i\smash{^{(k)}} -  z_j\smash{^{(k)}} \|_2$
; and 
ii) the return-based distance $ d_{i,j}\smash{^{(k)}} := |R_i\smash{^{(k)}} - R_j\smash{^{(k)}}|.$

For an anchor index $i$ and a target index $j$, we define
\begin{equation}
    \notag
    \mathcal{F}_{i,j}^{(k)}
    :=
    \{\, l \in [I] \setminus \{i\} \mid d_{i,l}^{(k)} \ge d_{i,j}^{(k)} \,\},
\end{equation}
the set of indices whose return-based distance to the anchor is no smaller than that of the target.
Then, the RNC objective minimizes the expected negative log-likelihood: 
\begin{align}
    \notag
    &\ell_v^0(\theta, \psi^{(k)}) = \\
    \notag
    &-
    \sum_{i=1}^I
    \sum_{\substack{j=1 \\ j\neq i}}^I
    \mathbb{E}_{q_\theta(\cdot | \mathcal{C}_i), \; q_\theta(\cdot | \mathcal{C}_j)}
    \left[
        \log
        \frac{\exp(s^{(k)}_{i,j})}
        {\sum_{l \in \mathcal{F}_{i,j}^{(k)}} \exp(s^{(k)}_{i,l})}
    \right].
\end{align}
Intuitively, the RNC loss encourages policies with similar returns to be embedded closer in latent space than those with disparate returns. As shown in \citet[Section~4]{zha2023rank}, optimizing the RNC objective induces $\delta$-ordered representations in the task-specific latent space $\mathcal{Z}_{\psi^{(k)}}$ with respect to returns (see Appendix \ref{app:delta_ordering}), which in turn is key to enable gradient-based optimization in the latent space.

\noindent \textbf{Projector loss.} The role of the projector $u_{\psi^{(k)}}$ is to transfer the $\delta$-ordering induced by the RNC loss from the task-specific latent space $\mathcal{Z}_{\psi^{(k)}}$ back to the shared policy representation space $\mathcal{H}_\theta$. 
While $u_{\psi^{(k)}}$ could in principle be implemented as a non-linear map, preserving this ordering requires a structure that does not distort relative distances.
For this reason, we parameterize $u_{\psi^{(k)}}$ as a linear map $z_i = u_{\psi^{(k)}}(h_i) = U^{(k)} h_i + b^{(k)}$, where $U^{(k)} \in \mathbb{R}^{d_{z_k} \times d_h}$ is constrained to be semi-orthonormal, i.e., $U^{(k)} (U^{(k)})^\top = I_{d_{z_k}}$ with $I_{d_{z_k}}$ being the identity of dimensionality $d_{z_k}$. 
Under this constraint, $u_{\psi^{(k)}}$ defines a partial isometry that preserves Euclidean distances between representations projected onto the row space of $U^{(k)}$. 
As a result, the $\delta$-ordering of per-task representations $\{z_i\}_{i=1}^I$ is preserved in the base policy representations $\{h_i\}_{i=1}^I$.
We formalize this in the following Proposition, with proof given in Appendix~\ref{app::proof_partial_isometry}.

\begin{proposition}
    \label{prop::partial_isometry}
    For semi-orthonormal $U^{(k)}$, the  map $u_{\psi^{(k)}}$ defines a partial isometry. As a result, for any collection of $\delta$-ordered representations $\{z_i\}_{i=1}^I$, the corresponding generating representations $\{h_i\}_{i=1}^I$, when projected onto the row space of $U^{(k)}$, preserve the $\delta$-ordering for all $k$.
\end{proposition}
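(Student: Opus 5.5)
The argument is linear algebra followed by a substitution into the definition of $\delta$-ordering from Appendix~\ref{app:delta_ordering}. Since $\delta$-ordering depends on the representations only through the pairwise distances $\|z_i-z_j\|_2$ (and on the fixed returns), it is enough to show that these distances equal the corresponding distances between the projected base representations.

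\textbf{Step 1: the partial isometry.} Write $U=U^{(k)}$ and let $P:=U^\top U\in\mathbb{R}^{d_h\times d_h}$. From $UU^\top=I_{d_{z_k}}$ we get
\[
P^2=U^\top(UU^\top)U=P \quad\text{and}\quad P^\top=P,
\]
so $P$ is the orthogonal projector onto the row space of $U$. For any $x\in\mathbb{R}^{d_h}$,
\[
\|Ux\|_2^2=x^\top U^\top U x=x^\top P x=\|Px\|_2^2 .
\]
Hence $U$ is isometric on the row space of $U$ and vanishes on its orthogonal complement, which is the definition of a partial isometry. The affine map $u_{\psi^{(k)}}(h)=Uh+b^{(k)}$ inherits this property because the translation $b^{(k)}$ does not affect differences.

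\textbf{Step 2: distance preservation.} For generating representations $h_i,h_j$ with $z_i=Uh_i+b^{(k)}$,
\[
\|z_i-z_j\|_2=\|U(h_i-h_j)\|_2=\|P h_i-P h_j\|_2 .
\]
So the similarities $s^{(k)}_{i,j}=-\|z_i-z_j\|_2$ coincide exactly with the negative distances between the projected representations $Ph_i$ and $Ph_j$. The returns $R_i^{(k)}$ attached to each index are unchanged.

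\textbf{Step 3: transfer of $\delta$-ordering.} The $\delta$-ordering conditions on $\{z_i\}$ are inequalities involving only the pairwise distances and the return gaps $d^{(k)}_{i,j}$. For example, they may require that $\|z_i-z_j\|<\|z_i-z_l\|$ whenever $d_{i,j}<d_{i,l}$, with a margin $\delta$. Replacing each $\|z_i-z_j\|$ by the equal quantity $\|Ph_i-Ph_j\|$ leaves every such inequality literally the same. Therefore $\{Ph_i\}$ is $\delta$-ordered with the same $\delta$. This holds for every $k$, since the argument uses only the semi-orthonormality of $U^{(k)}$.

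\textbf{Main obstacle.} The only delicate point is the exact form of the $\delta$-ordering definition in Appendix~\ref{app:delta_ordering}. If it involves normalized embeddings or similarities rather than raw distances, I need to check that it still depends on the representations only through the distances $\|z_i-z_j\|$. If it does, Step 3 goes through as written. I would also note explicitly that the claim concerns $Ph_i$ and not $h_i$ itself: components of $h_i$ orthogonal to the row space of $U$ can change distances in $\mathcal{H}_\theta$ arbitrarily.
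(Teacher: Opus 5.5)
Your proposal is correct and follows the paper's proof essentially step for step: the paper also sets $W^{(k)}=(U^{(k)})^\top U^{(k)}$ as the orthogonal projector onto the row space, cancels the bias, uses $\|z_i-z_j\|_2=\|U^{(k)}(h_i-h_j)\|_2=\|W^{(k)}(h_i-h_j)\|_2$, and substitutes this into the $\delta$-ordering inequalities. The obstacle you flag does not arise, because the appendix defines $\delta$-ordering through the raw distances $s_{i,j}=-\|z_i-z_j\|_2$; you also check $P^2=P$ and $P^\top=P$ more explicitly than the paper does.
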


Therefore, we regularize to encourage this orthonormality property in the linear operator by using the loss
\begin{equation}
    \ell_v^1(\psi^{(k)}) = \| U^{(k)} (U^{(k)})^\top - I_{d_{z_k}} \|_F^2.
\end{equation}

\noindent \textbf{Value reconstruction loss.} To reconstruct the value functions we fit a collection of regressors $\{v_{\xi^{(k)}}\}_{k=1}^K$, where each $v_{\xi^{(k)}}$ is trained to approximate $v^{(k)}_\pi $ by minimizing
\begin{equation}
    \notag
    \ell_v^{2}(\xi^{(k)}) =  \sum_{i=1}^I \mathbb{E}_{q_\theta(\cdot \mid \mathcal{C}_i)} \left[ \left\| v_{\xi^{(k)}}\big( u_{\psi^{(k)}}(\tilde{h}_i) \big) - R_i^{(k)} \right\|_2^2 \right].
\end{equation}

Proposition~\ref{prop::partial_isometry} guarantees that representations in the shared latent space $\mathcal{H}_\theta$ are $\delta$-ordered up to a projection. 
This implies that the latent space is arranged in such a way that neighboring policies in the  representation spaces yield similar returns, that are a proxy to value functions.
As a consequence, local gradients in the shared latent space $\mathcal{H}_\theta$ correspond to monotone changes in value functions, enabling reliable gradient-based optimization directly in representation space to search for policies with prescribed value-function properties. 
This observation is central to the behavior synthesis task introduced next.



\subsection{Practical considerations}
\label{sec::practical_considerations}

\noindent \textbf{Sampling.} In practice, we run trajectories under different policies and we treat every observed trajectory as a distinct policy realization, hence $M_p = 1$ and $N_p$ is the number of transitions within the trajectory. A context set is constructed by sampling unordered subsets of state--action pairs from a $D_{\pi_p}$. This allows us to construct positive pairs for contrastive learning by drawing two independent context sets from the same $D_{\pi_p}$. See Algorithm~\ref{appendix:algorithm} for details.

\noindent \textbf{Parametrization.} Motivated by Theorem~\ref{thm:existence_vs_tilde}, we parameterize the encoder $g_\theta$ using an attention-based architecture, specifically a BERT-style Transformer \cite{devlin2019bert}. 
Each context set $\mathcal{C}$ is treated as a sequence of $(s,a)$ tokens augmented with a learnable \texttt{[CLS]} token, whose final embedding defines the policy representation posterior $p_\theta(\cdot \mid \mathcal{C})$. 
To ensure permutation invariance, we omit positional embeddings and treat each context set as an unordered collection. The decoder $\pi_\phi$ defines a stochastic policy with a diagonal Gaussian action distribution, whose mean and standard deviation are given by MLP-based networks. See Appendix~\ref{appendix:trainingdetails} for details.

\section{Steerable Behavior Synthesis}
\label{sec::inference}

Our framework learns a structured policy latent space $\mathcal{H}_\theta$ where the $\delta$-ordering property of the latent space hints that we can search in the latent space with gradient-based optimization.
This enables zero-shot behavior synthesis by directly optimizing over latent representations at test time. 
Unlike conditional generative models that can only handle conditioning variables seen at training time~\cite{chen2021decision,ajay2022conditional}, we treat behavior synthesis as a constrained optimization problem over $h\in\mathcal{H}_\theta$. This formulation allows searching the policy space to reach performance targets subject to previously unseen constraints without additional training or environment interaction.

Given a target value $v_g\smash{^{(1)}}$ for objective $k=1$ and additional inequality constraints on other objectives, we solve
\begin{equation}
\label{eq:latent_constrained}
\begin{aligned}
\min_{h\in\mathcal{H}_\theta}\;\;&
\ell_{\mathrm{g}}(h)
:=
\bigl\| v_{\xi^{(1)}}(u_{\psi^{(1)}}(h)) - v_g^{(1)} \bigr\|_2^2 \\
\text{s.t.}\;\;&
c^{(k)}(h) \le 0,\qquad k=2,\dots,K,
\end{aligned}
\end{equation}
where each constraint $c^{(k)}(h)$ can encode requirements such as safety bounds expressed through learned value predictors, e.g., 
$c^{(k)}(h)= v^{(k)}_{\mathrm{c}} - v_{\xi^{(k)}}(u_{\psi^{(k)}}(h))$.

While similar latent optimization strategies have been explored for the unconstrained case \cite{gomez2018automatic}, naive gradient descent in high-dimensional latent spaces risks drifting into low-density regions, where the decoder and value predictors are not well-supported. 
To keep iterates on the support of the learned representation manifold, we project each update onto a local tangent approximation.
Specifically, at each iterate $t$, we construct a matrix $E(h_t)$ with the nearest neighbors of $h_t$ among training policy embeddings; compute $V_t\in\mathbb{R}^{d_h\times p}$, which contains the top-$p$ principal components of $E(h_t)$; and define the projector $P_t := V_t V_t^\top$.

We solve \eqref{eq:latent_constrained} using a projected primal--dual method. 
With Lagrange multipliers $\lambda\in\mathbb{R}_+^{K-1}$, define the Lagrangian $\mathcal{L}(h,\lambda)=\ell_{\mathrm{steer}}(h)- \lambda^\top c(h)$, where $c(h)=[c_1(h),\dots,c_L(h)]^\top$. Then, we run $T$ iterations as
\begin{align}
h_{t+1}
&\leftarrow
h_t
-
\eta_h\, P_t \nabla_h \mathcal{L}(h_t,\lambda_t),
\label{eq:latent_primal}\\
\lambda_{t+1}
&\leftarrow
\Bigl[\lambda_t + \eta_\lambda\, c(h_{t+1})\Bigr]_+,
\label{eq:latent_dual}
\end{align}
where $[\cdot]_+$ denotes projection onto the non-negative orthant.
After optimization, the synthesized policy is obtained by decoding $\pi_\phi(\cdot\mid s,h_T)$.

\begin{figure*}[h!]
    \centering
    \begin{subfigure}[b]{0.44\textwidth}
        \centering
        \includegraphics[width=\textwidth]{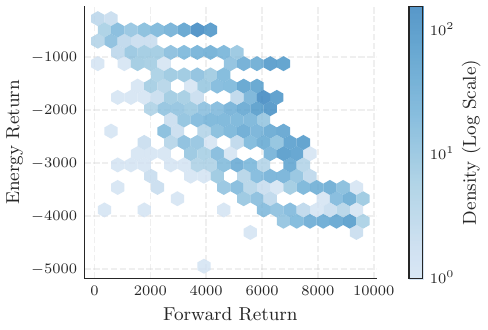}
    \end{subfigure}
    \hspace{5mm}
    \begin{subfigure}[b]{0.44\textwidth}
        \centering
        \includegraphics[width=\textwidth]{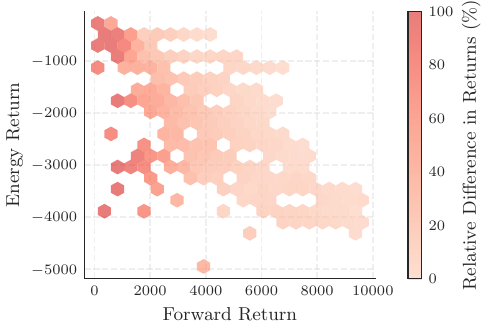}
    \end{subfigure}
    
    \caption{(a) Distribution of policies in the training dataset, where color intensity represents the number of trajectories falling within each return bin. (b) Relative difference in returns between ground-truth  and returns obtained by the latent-conditioned decoder. 
    }
    \label{fig:side_by_side_comparison}
\end{figure*}

\section{Experiments}
\label{sec::results}
Our experiments evaluate (i) the fidelity of policy reconstruction from latent representations; (ii) the emergence of ordered latent structures induced by RNC loss; and (iii) the efficacy of zero-shot behavior synthesis under constraints.
To such end, we use the Multi-Objective MuJoCo environments \cite{momujoco}, which extend standard continuous control tasks \cite{mujoco} with vector-valued rewards. 
Learning a policy representation space requires training data that spans diverse behaviors. 
Therefore, we construct our datasets by first training different PPO agents \cite{schulman2017proximal}, each optimized under a distinct scalarization of the reward vector \cite{roijers2013survey}, and subsequently collecting trajectories from both intermediate and converged policies. Figure~\ref{fig:side_by_side_comparison} (left) visualizes the resulting return landscape for the HalfCheetah environment. See Appendix~\ref{appendix:datasets} for more details on dataset generation.

\subsection{Policy reconstruction from representations}
Successful behavior steering via latent space optimization requires the encoder to map diverse policies to distinct representations and the decoder to faithfully reconstruct. 
We validate these capabilities by deploying trained decoder conditioned on representations $\tilde{h}_i \sim q_\theta(\cdot \mid \mathcal{C}_i)$ and comparing rollout returns $\tilde{R}_i$ against target returns $R_i$. 
Figure~\ref{fig:side_by_side_comparison} (right) illustrates that for HalfCheetah, relative differences remain largely below $10\%$. These results confirm our framework's capacity to distinguish and reproduce diverse behaviors. Results for other environments are provided in Appendix~\ref{appendix:imitation}.

\subsection{Structure of latent space}

We first qualitatively evaluate the alignment between latent geometry and return landscapes via dimensionality reduction.
Projected embeddings $\{z_i\smash{^{(k)
}}\}_{i=1}^I$ are driven to be $\delta$-ordered with respect to $R^{(k)}$ by RNC loss. UMAP \cite{mcinnes2018umap} visualizations in Figure~\ref{fig:zspace} reveal emergence of smooth and monotonic representations.
Crucially, this geometry transfers to the shared representation $\mathcal{H}_\theta$, as it can be seen in Figure~\ref{fig:repr_comparison} (left), yielding an organized space amenable for gradient-based search.
In contrast, representations learned with a standard VAE, presented in Figure~\ref{fig:repr_comparison} (right) show significant disorder with respect to returns, indicating that reconstruction objectives alone are insufficient to align the latent space with the return landscape.

\begin{figure}[htbp]
    \centering
    \includegraphics[width=1.0\columnwidth]{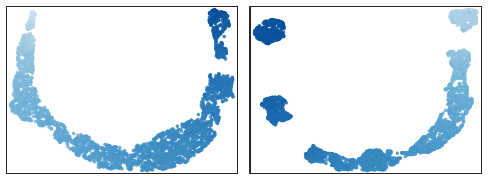}
    \caption{UMAP visualizations of the projected embeddings $\{z_i\smash{^{(k)}}\}_{i=1}^I$ colored with corresponding velocity returns (Left) and energy penalty returns (Right). The learned embeddings exhibit a consistent ordering.}
    \label{fig:zspace}
\end{figure}

\begin{figure*}[t!]
  \centering
  \begin{subfigure}[b]{0.45\textwidth}
    \includegraphics[width=\textwidth]{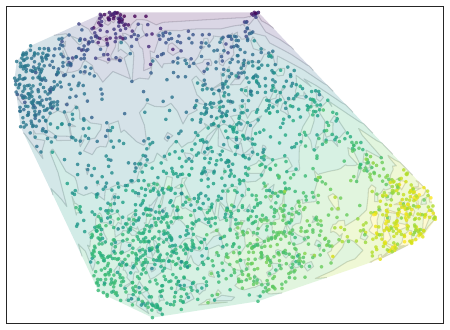}
  \end{subfigure}
  \hfill
  \begin{subfigure}[b]{0.45\textwidth}
    \includegraphics[width=\textwidth]{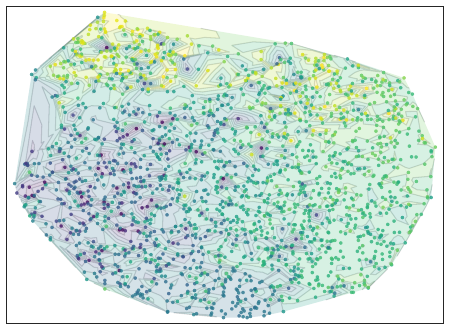}
  \end{subfigure}
  \hfill
  \begin{subfigure}[b]{0.09\textwidth}
    \includegraphics[height=5.65cm]{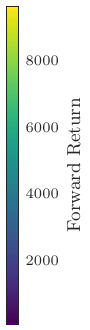}
  \end{subfigure}
  \caption{UMAP visualizations of (a) policy representations trained with our full method, underlayed with contours of return values; and (b) policy representations trained with VAE baseline ($\alpha = 0$), showing a disordered landscape with no clear value-based geometry.}
  \label{fig:repr_comparison}
\end{figure*}

We quantitatively evaluated this alignment by training linear regressors to predict normalized returns with the encoder frozen (Table~\ref{tab:linear_probing}). 
We compare against (1) \textit{VAE ($\alpha=0$)}: a standard VAE trained without contrastive loss, similar to \citet{wang2017robust}; (2) \textit{MLP Mean-Pool}: state-action pairs are processed independently with an MLP and aggregated via average pooling, effectively a Monte Carlo estimator; and (3) \textit{Unconstrained Projector ($\gamma=0$)}: our architecture trained without orthonormal regularization on the projection heads. Our superiority over \textit{MLP Mean-Pool} validates the use of attention-based encoder and supports Theorem~\ref{thm:existence_vs_tilde}, while improvement over \textit{Unconstrained Projector} suggests that orthonormal constraint enforces structure in $\mathcal{H}_\theta$ rather than allowing the projection head to absorb the complexity.

\begin{table}[htbp]
\centering
\caption{MSE of regressing policy representations $\tilde{h}$ into normalized trajectory returns using trained embeddings. Test set consists of trajectories collected by policies not included in the training set.}
\label{tab:linear_probing}
\resizebox{\columnwidth}{!}{%
\begin{tabular}{l c c}
\toprule
\textbf{Method} & \textbf{Train MSE} $\downarrow$ & \textbf{Test MSE} $\downarrow$ \\
\midrule
\textbf{Ours (Full Method)} & $\mathbf{0.029}$ & $\mathbf{0.201}$ \\
\midrule
VAE ($\alpha=0$) & $0.233$ & $0.850$ \\
MLP Mean-Pool Encoder & $0.081$ & $0.289$ \\
Unconstrained Projector ($\gamma=0$) & $0.053$ & $0.232$ \\
\bottomrule
\end{tabular}
}
\end{table}


\subsection{Zero-Shot Constrained Behavior Synthesis}

To demonstrate the merit of structured representations, we solve a novel constraint-aware behavior synthesis task via latent-space constrained optimization. 
We consider a HalfCheetah agent, targeting a specific forward return $v_g\smash{^{(1)}}$ subject to an energy constraint $v_c\smash{^{(2)}}$. The goal is to find a latent code $h^*$ that minimizes the steering objective $\ell_{\text{steer}}$ for target $v_g^{(1)}$ while strictly satisfying $v^{(2)} \ge v_{c}\smash{^{(2)}}$. Optimization runs entirely in the latent space via primal-dual updates (Eqs.~\ref{eq:latent_primal}--\ref{eq:latent_dual}) using the trained value predictors as differentiable surrogates for the true value functions.

Figure~\ref{fig:constrained_steering} visualizes a representative task initialized at a high-velocity but energy-inefficient policy, with $v^{(1)}_{\text{init}} \approx 8000$ and $v^{(2)}_{\text{init}} \approx -4000$. Querying for $v_{\text{g}}^{(1)} = 6000$ subject to $v^{(2)} \ge v_c^{(2)} = -3000$, the trajectory rapidly exits the infeasible region and crosses the constraint boundary as shown in Figure~\ref{fig:constrained_steering} (right). Subsequently, the primal-dual mechanism ensures the trajectory evolves along the boundary edge without re-entering the infeasible region. Concurrently, Figure~\ref{fig:constrained_steering} (left) shows the trajectory converging to the target iso-contour. The optimization terminates once a latent vector is identified whose forward return is sufficiently close to the prescribed target while satisfying the energy constraint.

\begin{table}[b]
\centering
\caption{Optimization success rates, return discrepancy (target vs. realized) and constraint violation across 100 evaluation tasks.}
\label{tab:quantitative_bench}
\fontsize{9}{9}\selectfont
\begin{tabular}{lccc}
\toprule
\textbf{Method} & $\mathcal{S}(\%)\uparrow$& $\mathcal{E}_{\text{targ}}$ ($\%$) $\downarrow$ & $\mathcal{E}_{\text{cons}}$ ($\%$) $\downarrow$\\
\midrule
\textbf{Ours (Full Method)} & $\mathbf{98.0}$ & $\mathbf{11.67}$ & $\mathbf{0.13}$ \\
\midrule
VAE ($\alpha=0$) & $11.0$ & $40.8$ & $44.13$\\
VAE + InfoNCE & $55.0$ & $25.47$ & $8.93$ \\
AE + RNC & $62.0$ & $12.44$ & $3.56$ \\
\bottomrule
\end{tabular}
\end{table}

Quantitatively, we evaluate $100$ synthesis tasks, each with a randomly sampled $\tilde{h}_{\text{init}}$, forward return target $v_g\smash{^{(1)}}$ and energy constraint $v_c\smash{^{(2)}}$. We report \textit{optimization success} ($\mathcal{S}$) as the percentage of trials where the optimizer converges to a feasible solution according to value predictors. We further collect rollouts with decoder conditioned on final latent codes, reporting \textit{target error} ($\mathcal{E}_{\text{targ}}$; deviation from $v_g\smash{^{(1)}}$) and \textit{Constraint Violation} ($\mathcal{E}_{\text{cons}}$; magnitude of $v^{(2)}$ breach).
We attempt to solve the same optimization problem within the latent spaces learned by (1) \textit{VAE ($\alpha=0$)}: a standard VAE trained without contrastive loss, similar to \citet{wang2017robust}; (2) \textit{VAE + InfoNCE}: replacing RNC loss with InfoNCE \cite{oord2018representation}; and (3) \textit{AE + RNC}: a deterministic autoencoder trained with RNC loss.

Superiority over \textit{VAE + InfoNCE} suggests that clustering policies without explicit ordering is insufficient for reliable latent-space steering. Meanwhile, the lower success rate of \textit{AE + RNC} highlights that the variational prior is crucial for inducing a smooth and connected manifold suitable for gradient-based optimization.

\begin{figure}[htbp]
    \centering
    \includegraphics[width=1.0\columnwidth]{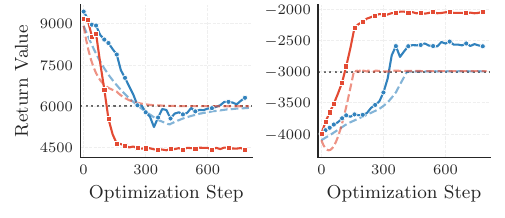}
    \caption{We compare latent optimization trajectories for a constrained synthesis task using naive gradient descent (Red) versus projected gradient descent (Blue). The plots show the forward return (Left) and energy return (Right) of trajectories collected by the decoder (solid lines) and the corresponding value predictions (dashed lines) over optimization steps.}
    \label{fig:projected_gd}
\end{figure}

\begin{figure*}[t]    
    \centering
    \begin{subfigure}[b]{0.49\textwidth}
        \includegraphics[width=\textwidth]{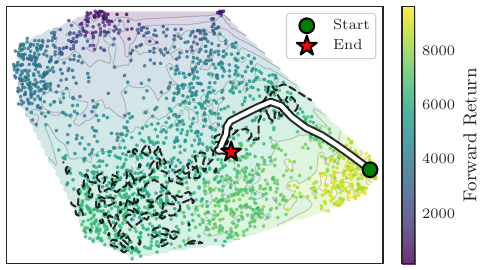}
    \end{subfigure}
    \hfill
    \begin{subfigure}[b]{0.49\textwidth}
        \includegraphics[width=\textwidth]{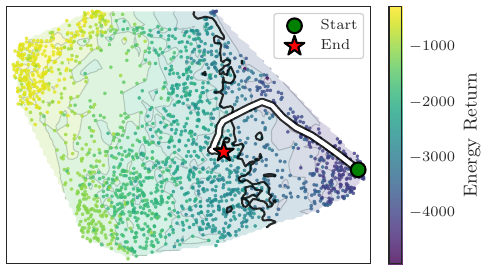}
    \end{subfigure}

    \caption{Optimization trajectory (green dot to red star) overlaid on return landscapes. (a) Trajectory progresses toward the iso-contour (dashed line) corresponding to the target value. (b) Trajectory traverses across the feasible boundary (solid line) via primal-dual updates to satisfy the energy constraint.
    }
    \label{fig:constrained_steering}
\end{figure*}

Finally, restricting gradient updates to the local tangent space of the training distribution is critical for robust synthesis. To demonstrate this, we solve a behavior synthesis task using two distinct optimization strategies: naive gradient descent and our proposed projected gradient descent. We then collect trajectories using the decoder conditioned on the intermediate latent codes along each optimization path, recording both the ground-truth returns and the estimates from the value regressor. The discrepancy between predicted value and actual returns in Figure~\ref{fig:projected_gd} indicates that naive optimization traverses regions where return regressors become unreliable and decoder $\pi_\phi$ fails to reconstruct coherent behaviors, demonstrating the importance of constraining updates to the data manifold.

\section{Discussion \& Related Work}
\label{sec::discussion}

Latent policy representations have been used across diverse domains, including skill discovery \cite{eysenbach2018diversity}, policy evaluation \cite{harb2020policy, tang2022inputting, scarpellini2023pi2}, multi-agent games \cite{grover2018learning, albrecht2018autonomous}, and multi-task learning \cite{wang2017robust, rakelly2019efficient, zintgraf2019varibad, raileanu2020fast, sodhani2021multi}. 
The interpretation of these latent representations varies significantly. In unsupervised skill discovery, representations often encode state reachability \cite{eysenbach2018diversity}, whereas in meta-RL, they encode reward functions for generalization among tasks~\cite{rakelly2019efficient}. In contrast, our representations focuses explicitly on summarizing the policy itself as well as the associated value functions.

A critical design choice in representation learning is the input modality. 
While direct encoding of raw policy parameters has been explored \cite{harb2020policy, tang2022inputting}, such methods are inherently restricted to parametric policies and require identical network architectures.
To achieve broader applicability, a more general paradigm involves encoding environmental interactions, specifically sets of state-action pairs \cite{wang2017robust, grover2018learning, scarpellini2023pi2}. We adopt the interaction-based paradigm to ensure our method remains architecture-agnostic and applicable to any setting where behavior is observable, ranging from active online agents to static offline datasets.


To learn the representation encoder, prior works typically employ either generative or contrastive objectives. Contrastive methods \cite{grover2018learning, tang2022inputting} maximize discriminability but often yield fragmented latent spaces, effectively reducing the representation problem to a classification task. 
Conversely, variational approaches model policies as distributions, promoting latent smoothness but remaining agnostic to the underlying value function profiles \cite{wang2017robust}. Our framework integrates the generative backbone with a specialized contrastive objective \cite{zha2023rank}, inducing a latent geometry explicitly aligned with value functions.

This structured geometry enables a novel behavior synthesis paradigm. Recent conditional generative models \cite{chen2021decision, ajay2022conditional} synthesize behaviors by conditioning the generation process on attribute labels observed during training. Consequently, their test-time capabilities are restricted to condition variables and constraint labels available in the training set. In contrast, we treat behavior synthesis as a test-time optimization problem. This allows us to satisfy novel composite queries without requiring constraints to be observed or explicitly labeled during training.


\section{Conclusion \& Future Work}
\label{sec::conclusion}

We presented a principled framework for learning general-purpose policy representations. By integrating variational generative modeling with contrastive  learning, we proposed a method to design a latent space that is both smooth and explicitly aligned with value functions. This structure enables test-time behavior synthesis under unseen constraints through latent space optimization. Our findings suggest that such value-aware representations are a critical enabler for controllable agent behavior. Interesting future directions include utilizing the latent space to facilitate online RL exploration and exploitation, and developing dynamics-augmented latent-conditioned decoders for meta-learning across varying environments.

\bibliography{reference}
\bibliographystyle{icml2026}

\newpage
\appendix
\onecolumn

\section{Theoretical Results}
\subsection{Proof of Theorem~\ref{thm:existence_vs_tilde}}
\label{app::thm_existence}

\begin{proof}
Recall that the dataset $\mathcal D_\pi=\{(s_n,a_n)\}_{n=1}^N$ sampled i.i.d. from $d_\pi$, and that the policy representation $h_\pi=\mathbb E_{d_\pi}[f(s,a)]$ is empirically and parametrically approximated as
\begin{align}
    \notag
    g(\mathcal D_\pi)&=\sum_{n=1}^N w(s_n,a_n\mid\mathcal D_\pi)\,f(s_n,a_n),\\
    \notag
    g_\theta(\mathcal D_\pi)
    &=
    \sum_{n=1}^N
    w_{\theta_0}(s_n,a_n\mid\mathcal D_\pi)\,f_{\theta_1}(s_n,a_n) = \tilde h_\pi
\end{align}
and we let the estimator $g(\mathcal{D}_\pi)$ be the one in \citep[Lemma 3]{oates2017control}.
We begin the proof by adding and subtracting $g(\mathcal D_\pi)$ and using the triangle inequality to yield
\begin{equation}
\label{eq:tri_existence}
\|\tilde h_\pi-h_\pi\|
\le
\underbrace{\|g(\mathcal D_\pi)-h_\pi\|}_{(a)}
+
\underbrace{\|\tilde h_\pi-g(\mathcal D_\pi)\|}_{(b)},
\end{equation}
where (a) is the sampling error term while (b) is the approximation error.

\noindent \textbf{(a) Sampling error.} Regarding the sampling error term (a) in \eqref{eq:tri_existence}, under Assumptions~i)--iii) of Theorem~\ref{thm:existence_vs_tilde} the conditions required to invoke Theorem~2 of \cite{oates2017control} are satisfied. In particular, the continuous differentiability  of the density associated with $d_\pi$ on $\Omega$ for all $\pi \in \Pi$ ensures A1; the vanishing of the density associated with $d_\pi$ on the boundary of $\Omega$ ensures A2$'$; and the twice differentiability and boundedness of the kernel $\kappa$, together with the boundedness and compactness of $\Omega$, ensure A3--A4. 
Finally, A5 holds since the feature map $f$ is assumed to lie in the RHKS space associated with $\kappa$.
As a consequence, combining Theorem~2 and Lemma~3 of \cite{oates2017control} guarantees the existence of a convex combination of evaluations of $f$ over $N$ state--action samples from $\mathcal{D}_\pi$ such that
\begin{equation}
\label{eq:cf_rate}
\|g(\mathcal D_\pi)-h_\pi\| \le C N^{-1},
\end{equation}
for some constant $C>0$ independent of $\pi$.

Moreover, as noted in Remark~5 of \cite{oates2017control}, the resulting weights $w$: i) depend only on kernel evaluations over $\mathcal{D}_\pi$ so they are independent of the feature function $f$, allowing them to be transferred across evaluations at different state--action pairs; ii) are jointly continuous in $(s,a)$ and in the elements of the set $\mathcal D$, and are permutation-invariant with respect to the samples in $\mathcal D$.

\paragraph{(b) Approximation error.} We now bound the second term (b) in~\eqref{eq:tri_existence}.
We add and subtract the cross-product of the parametric weights with the true feature map 
$\sum_{n=1}^N w_{\theta_0}(s_n,a_n\mid\mathcal D_\pi)\,f(s_n,a_n)$ and use the triangular inequality to obtain
\begin{align}
    \label{eq:approx_split}
    \|\tilde h_\pi-g(\mathcal D_\pi)\| 
    &\le
    \underbrace{\left\|
    \sum_{n=1}^N
    \bigl(
    w_{\theta_0}(s_n,a_n\mid\mathcal D_\pi)-w(s_n,a_n\mid\mathcal D_\pi)
    \bigr)\,f(s_n,a_n)
    \right\|}_{(b.1)} \nonumber \\
    &+
    \underbrace{\left\|
    \sum_{n=1}^N
    w_{\theta_0}(s_n,a_n\mid\mathcal D_\pi)
    \bigl(
    f_{\theta_1}(s_n,a_n)-f(s_n,a_n)
    \bigr)
    \right\|}_{(b.2)}.
\end{align}

Regarding the approximation error of the point-wise feature map (b.2) in \eqref{eq:approx_split}, we now prove that it can be made arbitrarily small by using a universal approximator.
Assuming there exists $C_{f} < \infty$ such that $\|\sum_{n=1}^N w_{\theta_0}(s, a | \mathcal{D})\| \leq C_f$ for all $(s, a) \in \Omega$ and $\mathcal{D} \subset \Omega$ we have that
\begin{align}
    \left\|
    \sum_{n=1}^N
    w_{\theta_0}(s_n, a_n | \mathcal{D}_\pi)
    \bigl(
    f_{\theta_1}(s_n, a_n)-f(s_n, a_n)
    \bigr)
    \right\|
    &\le
    \sum_{n=1}^N
    |w_{\theta_0}(s_n,a_n\mid\mathcal D_\pi)|
    \|f_{\theta_1}(s_n,a_n)-f(s_n,a_n)\|\nonumber\\
    &\le
    C_f
    \sup_{(s,a) \in \Omega}
    \|f_{\theta_1}(s,a)-f(s,a)\| .
    \label{eq:feature_bound_ex}
\end{align}
Fix $\varepsilon>0$. Since $\Omega$ is compact and $f$ is continuous, by standard universal
approximation results for MLPs \citep[Theorem 1]{cybenko1989approximation} there exists $\theta_1$ such that
\begin{equation}
\label{eq:feature_approx_ex}
    \sup_{(s,a) \in \Omega}
    \|f_{\theta_1}(s,a)-f(s,a)\|
    \le
    \frac{\varepsilon}{2C_f}.
\end{equation}

Now we do the same for the approximation of the weights term (b.1) in \eqref{eq:approx_split}.
Since $\Omega$ is compact and $\kappa$ is bounded, $f$ is bounded on $\Omega$.
Thus there exists $C_w<\infty$ such that $\sum_{n=1}^N\|f(s_n,a_n)\|\le C_w$ for all $(s,a)\in\Omega$, yielding
\begin{align}
    \left\|
    \sum_{n=1}^N
    \bigl(
    w_{\theta_0}(s_n,a_n\mid\mathcal D_\pi)-w(s_n,a_n\mid\mathcal D_\pi)
    \bigr) f(s_n, a_n)
    \right\| &\le
    \sum_{n=1}^N
    \bigl|
    w_{\theta_0}(s_n,a_n\mid\mathcal D_\pi)
    -
    w(s_n,a_n\mid\mathcal D_\pi)
    \bigr| \bigl| f(s_n, a_n)\bigr|, \nonumber \\
    &\le
    C_w \sup_{(s,a) \in \Omega, \mathcal{D} \subset \Omega}
    \bigl|
    w_{\theta_0}(s,a\mid\mathcal D)
    -
    w(s,a\mid\mathcal D) \bigr|.
    \label{eq:weight_bound_ex}
\end{align}
For fixed $(s,a)\in\Omega$, the mapping $w$ is a continuous permutation-invariant function of the finite set $\mathcal D \subset \Omega$.
By universality of set-based approximators for continuous invariant set functions \citep[Theorem~2]{zaheer2017deep}, there exist MLPs $\rho_{\theta_0}^0$ and $\rho_{\theta_0}^1$ such that the dependence of $w(s,a\mid\mathcal D)$ on $\mathcal D$ can be uniformly approximated by the invariant summary
\[
S(\mathcal{D}) = \rho_{\theta_0}^1\!\left(\sum_{(s',a')\in\mathcal D}\rho_{\theta_0}^0(s',a')\right),
\]
for all $\mathcal D$ with $|\mathcal D|=N$.
Since $w$ depends jointly and continuously on $(s,a)$ and $S(\mathcal D)$, universality of feedforward networks on compact domains \citep[Theorem~1]{cybenko1989approximation} implies the existence of an MLP $\rho_{\theta_0}^2$ such that
\[
w_{\theta_0}(s,a\mid\mathcal D)
=
\rho_{\theta_0}^2\!\Big((s,a),\,S(\mathcal D)\Big)
\]
uniformly approximates $w(s,a\mid\mathcal D)$ over $(s,a)\in\Omega$ and all $\mathcal D\subset\Omega$ with $|\mathcal D|=N$.
Therefore, there exists $\theta_0$ such that
\begin{equation}
\label{eq:weight_approx_ex}
    \sup_{\substack{(s,a) \in \Omega\\ \mathcal{D} \subset \Omega,\ |\mathcal D|=N}}
    \bigl|
    w_{\theta_0}(s,a\mid\mathcal D)
    -
    w(s,a\mid\mathcal D)
    \bigr|
    \le
    \frac{\varepsilon}{2 C_w}.
\end{equation}

Now, substituting~\eqref{eq:weight_approx_ex} and~\eqref{eq:feature_approx_ex}
into~\eqref{eq:weight_bound_ex}--\eqref{eq:feature_bound_ex}, we obtain
\[
    \|\tilde h_\pi - g(\mathcal D_\pi)\|
    \le
    \varepsilon.
\]
Combining with~\eqref{eq:cf_rate} and~\eqref{eq:tri_existence} yields
\[
    \|\tilde h_\pi-h_\pi\|
    \le
    C N^{-1}+\varepsilon,
\]
which proves~\eqref{eq:existence_vs_tilde}.
\end{proof}

\subsection{Proof of Proposition \ref{prop::partial_isometry}}
\label{app::proof_partial_isometry}

\begin{proof}
Recall that for $h \in \mathcal{H}_\theta$, the projector $u_{\psi^{(k)}}$ maps $\mathcal{H}_\theta$ into $\mathcal{Z}_{\psi^{(k)}}$ as an affine map $z^{(k)} = u_{\psi^{(k)}}(h) = U^{(k)} h + b^{(k)}$, where $U^{(k)} \in \mathbb{R}^{d_{z_k} \times d_h}$ is assumed to be semi-orthonormal, i.e., $U^{(k)} (U^{(k)})^\top = I_{d_{z_k}}$. 
Since $U^{(k)}$ is semi-orthonormal, $W^{(k)} = (U^{(k)})^\top U^{(k)}$ is the orthogonal projector onto the row space of $U^{(k)}$. 
Because Euclidean distance is translation invariant, the bias $b^{(k)}$ cancels out, yielding
\begin{equation}
    \label{eq::partial_isometry_mapping}
    \|z^{(k)}_i - z^{(k)}_j\|_2 = 
    \|U^{(k)}(h_i - h_j)\|_2 = \| W^{(k)}(h_i - h_j)\|_2
\end{equation}
for all $k$.
Therefore, $u_{\psi^{(k)}}$ defines a partial isometry, in the sense that it preserves Euclidean distance between representations projected onto the row space of $U^{(k)}$.

Since we have $\{z^{(k)}_i\}_{i=1}^I$ that are $\delta$-ordered for all $k$ (i.e., with score $s^{(k)}_{i,j}:=-\|z_i^{(k)}-z_j^{(k)}\|_2$),
it follows that for any $0<\delta<1$ and any $i\in[I]$, $j,l\in[I]\setminus\{i\}$,
\[
\begin{cases}
    \|z_i^{(k)} - z_j^{(k)}\|_2 
    < 
    \|z_i^{(k)} - z_l^{(k)}\|_2 - \frac{1}{\delta}
    & \text{if } d_{i,j} < d_{i,l}, \\[6pt]
    \bigl|
    \|z_i^{(k)} - z_j^{(k)}\|_2 
    -
    \|z_i^{(k)} - z_l^{(k)}\|_2
    \bigr|
    < \delta
    & \text{if } d_{i,j} = d_{i,l}, \\[6pt]
    \|z_i^{(k)} - z_j^{(k)}\|_2 
    >
    \|z_i^{(k)} - z_l^{(k)}\|_2 + \frac{1}{\delta}
    & \text{if } d_{i,j} > d_{i,l}.
\end{cases}
\]
Therefore, leveraging \eqref{eq::partial_isometry_mapping}, the same inequalities hold in
$\mathcal{H}_\theta$ under the projected geometry induced by $W^{(k)}$
\[
\begin{cases}
    \|W^{(k)}(h_i - h_j)\|_2 
    <
    \|W^{(k)}(h_i - h_l)\|_2 - \frac{1}{\delta}
    & \text{if } d_{i,j} < d_{i,l}, \\[6pt]
    \bigl|
    \|W^{(k)}(h_i - h_j)\|_2
    -
    \|W^{(k)}(h_i - h_l)\|_2
    \bigr|
    < \delta
    & \text{if } d_{i,j} = d_{i,l}, \\[6pt]
    \|W^{(k)}(h_i - h_j)\|_2
    >
    \|W^{(k)}(h_i - h_l)\|_2 + \frac{1}{\delta}
    & \text{if } d_{i,j} > d_{i,l}.
\end{cases}
\]
Consequently, representations $\{h_i\}_{i=1}^I$ inherits the $\delta$-ordering property from
$\{z_i^{(k)}\}_{i=1}^I$ along the row-space of $U^{(k)}$, i.e., up to orthogonal projection by
$W^{(k)}$. 
\end{proof}

\section{Datasets}\label{appendix:datasets}

Training the proposed architecture requires datasets that consist of trajectories with diverse behaviors. Standard offline RL benchmarks such as D4RL \cite{fu2020d4rl} are not suitable for our use case as they typically comprise trajectories collected by a limited number of policies, lacking the necessary diversity across the objective space.

\paragraph{Multi-Objective MuJoCo Environments.} 
We collect a dataset of policies with diverse behaviors using the Multi-Objective MuJoCo variants \cite{momujoco}. In these environments, the agent observes a reward vector at each transition rather than a scalar value. The specific environments and their definitions are detailed below. Note that alive bonuses $C$ and contact force penalties $F_{\text{contact}}$ are incorporated into the objectives to facilitate policy learning:

\begin{itemize}
    \item \textbf{MO-HalfCheetah-v5:} With state space $\mathcal{S} \in \mathbb{R}^{17}$ and action space $\mathcal{A} \in \mathbb{R}^{6}$, the agent considers two objectives: forward speed $r^{(1)} = v_x$ and energy consumption $r^{(2)} = -\sum_{i}a_i^2$, where $v_x$ is the velocity in the $x$-direction and $a_i$ is the torque applied to the $i$-th joint.
    
    \item \textbf{MO-Walker2d-v5:} With state space $\mathcal{S} \in \mathbb{R}^{17}$ and action space $\mathcal{A} \in \mathbb{R}^{6}$, the agent considers two objectives: forward speed $r^{(1)} = v_x + C$ and energy consumption $r^{(2)} = -\sum_{i}a_i^2 + C$. Here, $C = 1$ is the alive bonus, $v_x$ is the $x$-direction velocity, and $a_i$ is the applied joint torque.

    \item \textbf{MO-Humanoid-v5:} With state space $\mathcal{S} \in \mathbb{R}^{348}$ and action space $\mathcal{A} \in \mathbb{R}^{17}$, the agent considers two objectives: forward speed $r^{(1)} = v_x - F_{\text{contact}} + C$ and energy consumption $r^{(2)} = -\sum_{i}a_i^2 - F_{\text{contact}} + C$. Here, $C = 5$ is the alive bonus, $F_{\text{contact}}$ is the cost of external contact forces, $v_x$ is the $x$-direction velocity, and $a_i$ is the applied joint torque.
    
    \item \textbf{MO-Hopper-v5:} With state space $\mathcal{S} \in \mathbb{R}^{11}$ and action space $\mathcal{A} \in \mathbb{R}^{3}$, the agent considers three objectives: forward speed $r^{(1)} = v_x + C$, jumping height $r^{(2)} = 10 \cdot z + C$, and energy consumption $r^{(3)} = -\sum_{i}a_i^2 + C$. Here, $C = 1$ is the alive bonus, $z$ denotes the vertical height of the agent, $v_x$ is the $x$-direction velocity, and $a_i$ is the applied joint torque.

    \item \textbf{MO-Hopper-2obj-v5:} A modified version of MO-Hopper-v5 with identical state-action spaces but only two objectives. The energy penalty is integrated into the motion objectives: forward speed $R^{(1)} = v_x -\sum_{i}a_i^2 + C $ and jumping height $r^{(2)} = z -\sum_{i}a_i^2 + C$. 

    \item \textbf{MO-Ant-v5:} With state space $\mathcal{S} \in \mathbb{R}^{105}$ and action space $\mathcal{A} \in \mathbb{R}^{8}$, the agent considers three objectives: $x$-axis velocity $r^{(1)} = v_x - F_{\text{contact}} + C$, $y$-axis velocity $r^{(2)} = v_y - F_{\text{contact}} + C$, and energy consumption $r^{(3)} = -\sum_{i}a_i^2 - F_{\text{contact}} + C$. Here, $C = 1$ is the alive bonus, $F_{\text{contact}}$ is the contact force cost, and $a_i$ is the applied joint torque.

    \item \textbf{MO-Ant-2obj-v5:} A modified version of MO-Ant-v5 with identical state-action spaces but only two objectives. The energy and contact penalties are integrated into the velocity objectives: $x$-axis velocity $r^{(1)} = v_x -\sum_{i}a_i^2 - F_{\text{contact}} + C$ and $y$-axis velocity $r^{(2)} = v_y -\sum_{i}a_i^2 - F_{\text{contact}} + C$.
\end{itemize}

\paragraph{Reward Scalarization.} We utilize linear scalarization to transform the multi-objective problem into standard single-objective RL tasks \cite{roijers2013survey}. For an environment with reward vector $r_t \in \mathbb{R}^K$, we sample a weight vector $w \in \mathbb{R}^K$ such that $\sum w_i = 1$, and define the scalarized reward $\hat{r}_t = w^\top r_t$. For 2-objective environments like HalfCheetah, we perform uniform grid sampling over the weight simplex, selecting 10 equidistant weight vectors. For environments with more objectives, such as Ant, we sample 10 weight vectors from a symmetric Dirichlet distribution.

\paragraph{Policy Population.} For each sampled weight vector, we train an agent using PPO \cite{schulman2017proximal}. Our PPO algorithm is adapted from RSL-RL \cite{schwarke2025rsl}, and the hyperparamters used across all environments are reported in Table~\ref{tab:ppo_hyperparameters}.

\begin{table}[H]
    \centering
    \caption{PPO Hyperparameters.}
    \label{tab:ppo_hyperparameters}
    \begin{tabular}{lr}
        \toprule
        \textbf{Hyperparameter} & \textbf{Value} \\
        \midrule
        Parallel Environments & 16 \\
        Steps per Environment & 1024 \\
        Optimization Epochs & 10 \\
        Mini-batches per Epoch & 8 \\
        Learning Rate & $3 \times 10^{-4}$ \\
        Discount Factor & 0.99 \\
        GAE Parameter & 0.95 \\
        Clipping Parameter & 0.2 \\
        Value Loss Coefficient & 0.5 \\
        Entropy Coefficient & 0.0 \\
        Policy Network Architecture & MLP (256, 256) \\
        Value Network Architecture & MLP (256, 256) \\
        \bottomrule
    \end{tabular}
\end{table}

A key distinction between our approach and multi-objective RL (MORL) \cite{roijers2013survey} lies in the target distribution. While MORL typically focuses on identifying and approximating the Pareto Front \cite{pgmorl}, our objective is to model the entire policy space, including dominated regions. To this end, we retain intermediate policies throughout training, thereby enriching the policy population with a diverse set of behaviors that extends beyond near-optimal solutions. Dataset statistics are visualized in Appendix~\ref{appendix:imitation}. MORL algorithms \cite{pgmorl, basaklar2022pd} can be used to collect policies exhibiting more diverse behaviors, which we leave for future work.

\paragraph{Dataset Statistics.} Aggregating these checkpoints yields a population of approximately 500 distinct policies per environment. We execute each policy to collect 8,000 transition steps, resulting in a total dataset size of $N \approx 4 \times 10^6$ transitions per domain. Note that trajectories are collected using stochastic policies to boost the robustness of behavior cloning \cite{laskey2017dart}. The final dataset is structured as $\mathcal{D} = \{(\tau_n, R_n) \}_{n=1}^{N_{\text{total}}}$, where $R_n$ represents the attributes of interest associated with trajectory $\tau_n$. In our primary experiments, $R_n$ is defined as the vector of undiscounted cumulative rewards: $R_n = \sum_{t=1}^{T_{n}} r_t$, where $T_{n}$ denotes the length of trajectory $\tau_n$. However, our framework remains agnostic to the specific attribute definition and accommodates other trajectory-level characteristics, such as average velocity.

\begin{figure}[hbtp]
    \centering
    \includegraphics[width=0.55\columnwidth]{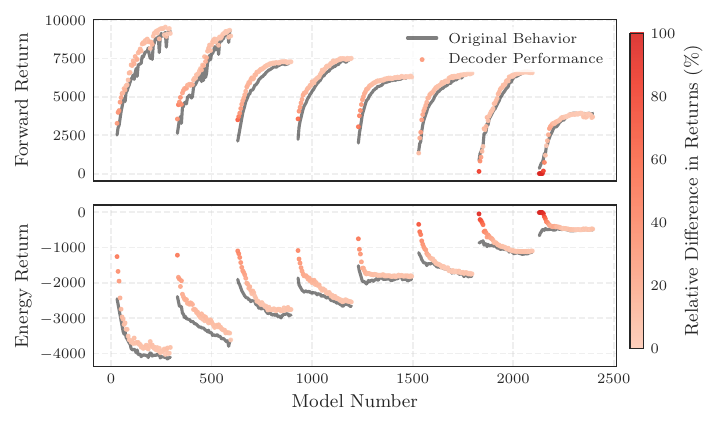}
    \caption{\textbf{Dataset composition and decoder performance in HalfCheetah.} We plot Forward Return (top) and Energy Return (bottom) over cumulative training checkpoints. Gray curves denote returns of trajectories collected by policy checkpoints. Colored dots represent the returns achieved by the decoder conditioned on inferred policy representations. Color intensity reflects the relative difference between the original and decoder-conditioned returns. The overall low intensity indicates that the decoder faithfully reproduces diverse behaviors.}
    \label{fig:sawtooth}
\end{figure}

To further illustrate the dataset composition, Figure~\ref{fig:sawtooth} visualizes the return statistics of the HalfCheetah policy population. The x-axis corresponds to the cumulative index of policy checkpoints saved throughout the entire PPO training process. We exclude scalarizations and checkpoints that induce stationary behaviors. For example, the optimal HalfCheetah policy becomes stationary when the energy reward coefficient is 1.0 and the forward reward coefficient is 0.0. The remaining 8 scalarizations are each trained for 300 steps, yielding a total of 2,400 checkpoints. We subsample every five checkpoints to form the training policy set. Solid gray curves show the returns of trajectories collected by the policy checkpoints.

\section{Implementation Details}\label{appendix:trainingdetails}
We execute the training procedure outlined in Algorithm~\ref{appendix:algorithm} using datasets collected from the environments of interest. All experiments are conducted on NVIDIA DGX B200 GPUs. To facilitate training, both the input observations and the regression targets (trajectory returns) are normalized to be zero mean and unit variance.

\subsection{Encoder Parametrization} 
We employ a BERT-style Transformer architecture \cite{devlin2019bert} for the policy encoder. Each context set $\mathcal{C}$ is treated as a sequence of $(s,a)$ tokens, with a learnable \texttt{[CLS]} token prepended. Each token is first processed by a two-layer MLP with hidden dimension 256 to produce a 32-dimensional embedding. These embeddings are then passed through two multi-head self-attention layers, each with 4 heads. The final output embedding of the \texttt{[CLS]} token is interpretted as the mean and log standard deviation of the posterior $p_\theta(\cdot \mid \mathcal{C})$, effectively representing policy as a latent distribution. We set the default embedding dimension to 32, resulting in a policy representation $\tilde{h} \in \mathbb{R}^{32}$. To improve generalization, we apply dropout with probability 0.1 to the inputs and self-attention layers. The projection heads $u^{(k)}$ reduce the dimensionality from 32 to 4, yielding per-task embeddings $z^{(k)} \in \mathbb{R}^{4}$ and projection matrices $W^{(k)} \in \mathbb{R}^{4\times32}$.

\subsection{Decoder Parametrization} 
We utilize a consistent decoder architecture across all experiments. The query state is first processed by a two-layer MLP with hidden dimension 256. The resulting feature vector is concatenated with the policy representation $\tilde{h}$ and further passed through a two-layer MLP with hidden dimension 256. The final layer outputs a vector of size $2 \times |\mathcal{A}|$, which is interpreted as the mean and diagonal log-standard deviation of the predicted action distribution.

\subsection{Hyperparameters}\label{appendix:hyperparameters}
The simultaneous minimization of the KL divergence, orthonormal regularization term and RNC loss creates a tension between regularization and geometric structuring. The KL term acts as a contraction force, constraining policy representations $\tilde{h}$ toward a standard Gaussian prior; while the RNC loss acts as an expanding force, driving embeddings apart to satisfy ordinal ranking constraints. We manage this trade-off via a KL annealing schedule ($\beta: 0.0 \to 0.05$), which allows geometric structure to emerge from the beginning. Empirically, the average representation norm converges to values between $2.0$ and $3.0$, with an average standard deviation of approximately $0.8$. $\zeta$ is chosen empirically to ensure effective regularization.

In practice, the similarity between embeddings is scaled by a temperature parameter $\tau_\text{sim}$, defining the similarity score as $s_{i,j}\smash{^{(k)}} := -\| z_i\smash{^{(k)}} - z_j\smash{^{(k)}} \|_2 / \tau_\text{sim}$. This scaling plays an important role. Setting $\tau_\text{sim} < 1$ sharpens the similarity distribution, which amplifies the gradients for the RNC loss and enforces stricter ranking in $\mathcal{Z}$-space even when $\tilde{h}$ remains close to the unit Gaussian. However, extremely low temperatures are avoided to prevent overfitting to the stochastic noise inherent in trajectory returns. A complete summary of hyperparameters used across all environments is provided in Table~\ref{tab:hyperparameters}.

\begin{table}[h]
\centering
\caption{Hyperparameters for Policy Representation Learning}
\label{tab:hyperparameters}
\begin{tabular}{lc|lc}
\toprule
\textbf{Parameter} & \textbf{Value} & \textbf{Parameter} & \textbf{Value} \\
\midrule
Context Length $|\mathcal{C}|$ & 32 & RNC Temp. ($\tau_\text{sim}$) & 0.5 \\
Learning Rate & $1 \times 10^{-3}$ & Rep. Learning Epochs ($E_{\text{rep}}$) & 200\\
$\alpha$ (Contrastive) & 1.0 & Rep. Batch Size ($I$) & 64 \\
$\zeta$ (Orthonormal Constraint) & 5.0 & Regressor Epochs ($E_{\text{reg}}$) & 100\\
$\beta_{\text{start}}$ (KL) & 0.0 & Regressor Batch Size ($J$) & 256 \\
$\beta_{\text{end}}$ (KL) & 0.05 & & \\
\bottomrule
\end{tabular}
\end{table}

\begin{algorithm}[H]
\caption{Structured Policy Representation Learning}
\label{appendix:algorithm}
\begin{algorithmic}[1]
\STATE \textbf{Input:} Dataset $\mathcal{D} = \{(\tau_n, R_n)\}_{n=1}^{N_{\text{total}}}$, Hyperparameters $\alpha, \beta, \zeta$.
\STATE \textbf{Initialize:} Encoder $g_\theta$, Decoder $\pi_\phi$, Linear Projection Heads $\{u_{\psi^{(k)}}\}_{k=1}^K$, Return Regressors $\{v_{\xi^{(k)}}\}_{k=1}^K$.

\vspace{0.5em}
\STATE \textcolor{brown}{$\blacktriangleright$ \textbf{Phase 1: Representation Learning}}
\vspace{0.2em}

\FOR{epoch $= 1$ \textbf{to} $E_{\text{rep}}$}
    \FOR{sampled batch $\{(\tau_i, R_i)\}_{i=1}^{I} \sim \mathcal{D}$}
        
        \STATE \textcolor{gray}{// Sampling context set to construct a two-view batch, essential for contrastive learning}
        \FOR{$i = 1$ \textbf{to} $I$}
            \STATE Sample two views: $\mathcal{C}_{2i-1}, \mathcal{C}_{2i} \sim \tau_i$.
            \STATE Sample query pairs: ${(s_i, a_i)} \sim \tau_i$.
        \ENDFOR
        \STATE Form augmented batch $\{(\mathcal{C}_i, R_{\lceil i/2 \rceil})\}_{i=1}^{2I}$.

        \vspace{0.5em}
        \STATE \textcolor{gray}{// Forward Pass}
        \STATE Compute posteriors: $q_\theta(\cdot \mid \mathcal{C}_i) = g_\theta(\mathcal{C}_i)$ for $i=1 \dots I$.
        \STATE Sample policy representations: $\tilde{h}_i \sim q_\theta(\cdot \mid \mathcal{C}_i)$ via reparameterization for $i=1 \dots I$.
        \STATE Project task-specific embeddings: $z_i^{(k)} = u_{\psi^{(k)}}(\tilde{h}_i)$ for $m=1 \dots M$ and $i=1 \dots I$.

        \vspace{0.5em}
        \STATE \textcolor{gray}{// Compute Losses \& Update}
        \STATE $\ell_{\pi}(\theta, \phi) \leftarrow \frac{1}{2I} \sum_{i=1}^{2I}\left[ - \log \pi_\phi(a_i \mid s_i, \tilde{h}_i) + \beta  D_{\mathrm{KL}} \big( q_\theta(\cdot \mid \mathcal{C}_i) \;\|\; \mathcal{N}(0, I) \right]$
        \STATE $\ell_v^0(\theta, \psi^{(k)}) \leftarrow -\frac{1}{2I(2I-1)}
        \sum_{i=1}^{2I}
        \sum_{\substack{j=1 \\ j\neq i}}^{2I} \log\frac{\exp(s^{(k)}_{i,j})}{\sum_{l \in \mathcal{F}_{i,j}^{(k)}} \exp(s^{(k)}_{i,l})}$
        \STATE $\ell_v^1(\psi^{(k)}) \leftarrow \| W^{(k)} (W^{(k)})^\top - I_{d_{z_k}} \|_F^2$
        \STATE $\ell_v(\theta, \psi) \leftarrow \frac{1}{K}\sum_{k=1}^{K} \left[ \alpha\ell_v^0(\theta, \psi^{(k)}) + \zeta \ell_v^1(\psi^{(k)}) \right]$
        \STATE $\theta, \phi, \psi \leftarrow \text{AdamW}\big( \nabla \ell_{\pi}(\theta, \phi) + \nabla \ell_v(\theta, \psi)\big)$
    \ENDFOR
\ENDFOR

\vspace{0.8em}
\STATE \textcolor{brown}{$\blacktriangleright$ \textbf{Phase 2: Return Regressor Training}}
\vspace{0.2em}
\STATE Freeze encoder $g_\theta$ and projection heads $\{u_{\psi^{(k)}}\}_{k=1}^K$.
\FOR{epoch $= 1$ \textbf{to} $E_{\text{reg}}$}
    \FOR{sampled batch $\{(\tau_j, R_j)\}_{j=1}^{J} \sim \mathcal{D}$}
        \STATE \textcolor{gray}{// Batch Encoding}
        \STATE Sample contexts: $\mathcal{C}_j \sim \tau_j$ for $j=1 \dots J$.
        \STATE Sample representations: $\tilde{h}_j \sim q_\theta(\cdot \mid \mathcal{C}_j)$ for $j=1 \dots J$.

        \vspace{0.5em}        
        \STATE \textcolor{gray}{// Regression Update}
        \STATE $\ell_v^{2}(\xi) \leftarrow \frac{1}{J} \sum_{j=1}^{J} \sum_{k=1}^K \left[ \left\| v_{\xi^{(k)}}\big( u_{\psi^{(k)}}(\tilde{h}_j) \big) - R_j^{(k)} \right\|_2^2 \right]$
        \STATE $\xi \leftarrow \text{AdamW}\big(\nabla \ell_v^{2}(\xi)\big)$
    \ENDFOR
\ENDFOR
\end{algorithmic}
\end{algorithm}

\section{Additional Experiments}
\label{appendix:experiments}

\subsection{Imitation of Diverse Behavior}
\label{appendix:imitation}
We provide additional imitation results for other Multi-Objective MuJoCo environments. Similar to Figure~\ref{fig:side_by_side_comparison}, each left subplot visualizes the distribution of collected policies in the multi-objective return landscape. Each right subplot illustrates the quality of imitation learning, quantified as the relative difference in returns between the original policy checkpoints and the trajectories collected by the decoder conditioned on the inferred policy representation. For environments with three objectives, we visualize the policy distribution using ternary plots projected onto a 2-simplex.  We observe decent coverage of the objective space with the collected training dataset, and the decoder achieves trajectory returns similar to the original policy checkpoints. This indicates that the proposed training algorithm yields an encoder capable of generating distinct representations and a decoder capable of reconstructing diverse behaviors.

\begin{figure}[h!]    
    \centering
    \begin{subfigure}[b]{0.48\textwidth}
        \includegraphics[width=\textwidth]{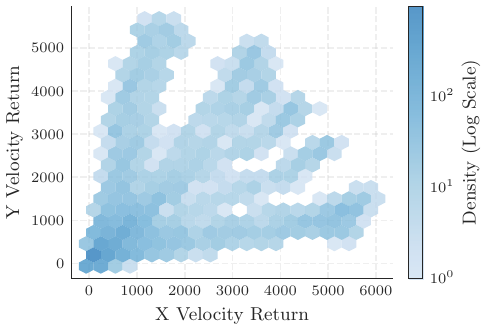}
        \caption{Dataset Diversity.}
    \end{subfigure}
    \hfill
    \begin{subfigure}[b]{0.48\textwidth}
        \includegraphics[width=\textwidth]{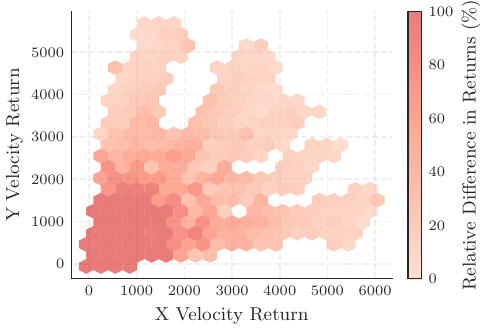}
        \caption{Imitation Learning Performance.}
    \end{subfigure}
    \caption{\textbf{Behavioral diversity and imitation performance for MO-Ant-2obj-v5.}}
    \label{fig:ant2obj_compare}
\end{figure}

\begin{figure}[h!]    
    \centering
    \begin{subfigure}[b]{0.48\textwidth}
        \includegraphics[width=\textwidth]{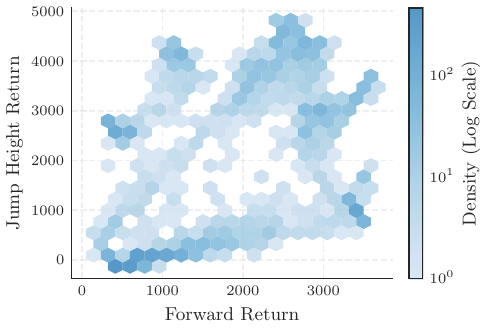}
        \caption{Dataset Diversity.}
    \end{subfigure}
    \hfill
    \begin{subfigure}[b]{0.48\textwidth}
        \includegraphics[width=\textwidth]{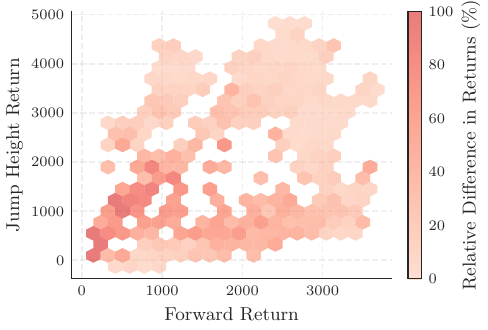}
        \caption{Imitation Learning Performance.}
    \end{subfigure}
    \caption{\textbf{Behavioral diversity and imitation performance for MO-Hopper-2obj-v5.}}
    \label{fig:hopper2obj_compare}
\end{figure}

\begin{figure}[h!]    
    \centering
    \begin{subfigure}[b]{0.48\textwidth}
        \includegraphics[width=\textwidth]{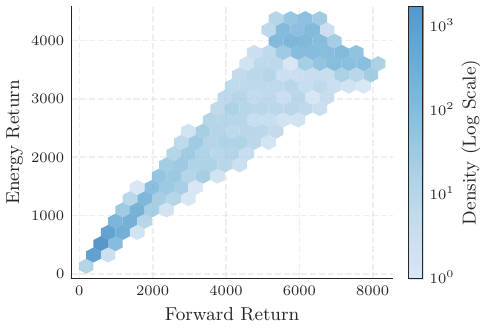}
        \caption{Dataset Diversity.}
    \end{subfigure}
    \hfill
    \begin{subfigure}[b]{0.48\textwidth}
        \includegraphics[width=\textwidth]{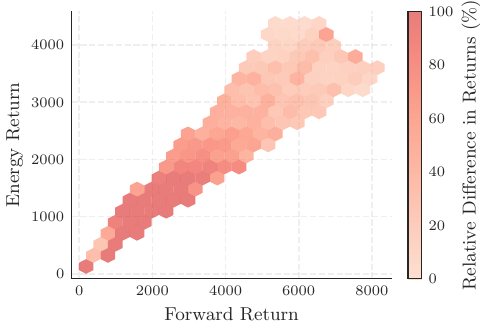}
        \caption{Imitation Learning Performance.}
    \end{subfigure}
    \caption{\textbf{Behavioral diversity and imitation performance for MO-Humanoid-v5.}}
    \label{fig:humanoid_compare}
\end{figure}

\begin{figure}[h!]    
    \centering
    \begin{subfigure}[b]{0.48\textwidth}
        \includegraphics[width=\textwidth]{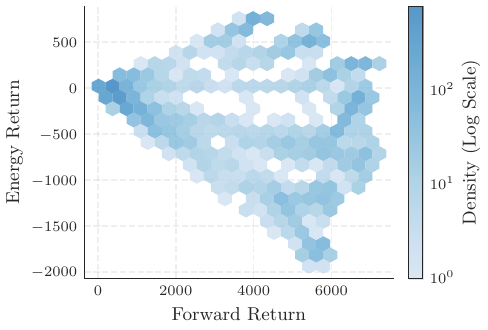}
        \caption{Dataset Diversity.}
    \end{subfigure}
    \hfill
    \begin{subfigure}[b]{0.48\textwidth}
        \includegraphics[width=\textwidth]{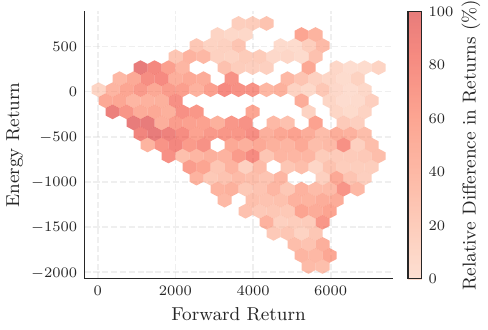}
        \caption{Imitation Learning Performance.}
    \end{subfigure}
    \caption{\textbf{Behavioral diversity and imitation performance for MO-Walker2d-v5.}}
    \label{fig:walker_compare}
\end{figure}

\begin{figure}[h!]    
    \centering
    \begin{subfigure}[b]{0.48\textwidth}
        \includegraphics[width=\textwidth]{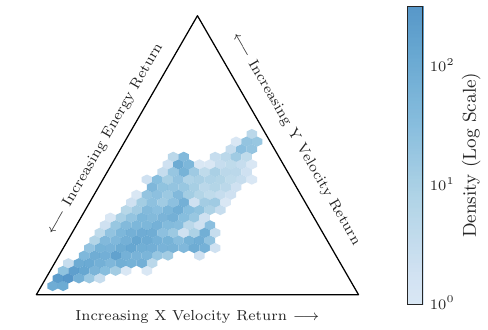}
        \caption{Dataset Diversity.}
    \end{subfigure}
    \hfill
    \begin{subfigure}[b]{0.48\textwidth}
        \includegraphics[width=\textwidth]{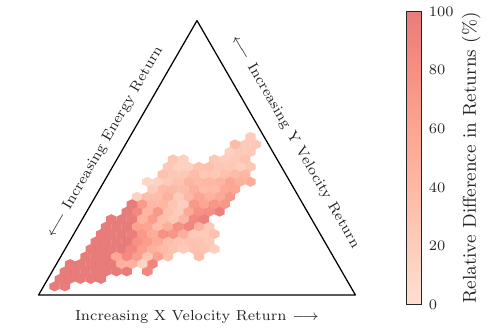}
        \caption{Imitation Learning Performance.}
    \end{subfigure}
    \caption{\textbf{Behavioral diversity and imitation performance for MO-Ant-v5.}}
    \label{fig:ant_compare}
\end{figure}

\begin{figure}[H]    
    \centering
    \begin{subfigure}[b]{0.48\textwidth}
        \includegraphics[width=\textwidth]{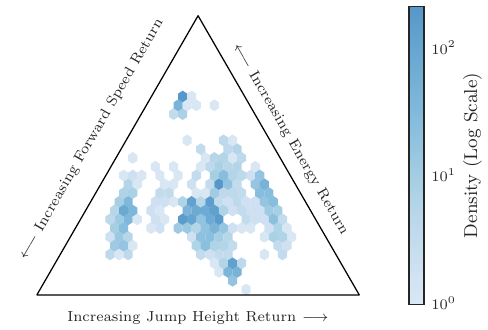}
        \caption{Dataset Diversity.}
    \end{subfigure}
    \hfill
    \begin{subfigure}[b]{0.48\textwidth}
        \includegraphics[width=\textwidth]{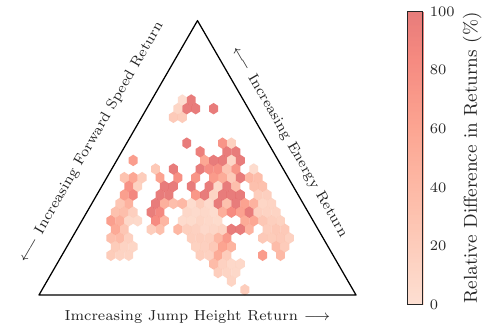}
        \caption{Imitation Learning Performance.}
    \end{subfigure}
    \caption{\textbf{Behavioral diversity and imitation performance for MO-Hopper-v5.}}
    \label{fig:hopper_compare}
\end{figure}

\FloatBarrier
\subsection{Geometry of Policy Representations}
To validate the effectiveness of the proposed algorithm, we visualize the learned policy representations for MO-MuJoCo environments. From dataset for each environment, we randomly sample a set of 2000 context-return pairs $\{(\mathcal{C}_i, R_i)\}_{i=1}^{2000}$ along with a policy representation for each $\tilde{h}_i \sim q_\theta (\cdot \mid \mathcal{C}_i)$. The resulting representations are projected using UMAP \cite{mcinnes2018umap} and colored according to individual return objectives. The UMAP parameters are set to their default values except for the number of neighbors. We choose \texttt{n\_neighbors} = 100 to prioritize the preservation of global structure over local clustering. The plots are augmented with isocontours estimated from the return values to demonstrate that the geometry of the latent space aligns with the ordinal structure of behavioral attributes.

Note that proximity in the UMAP visualization reflects local neighborhood structures rather than global Euclidean distances, which partially explains the separation between clusters in Figures~\ref{fig:hopper2obj_geometry} and~\ref{fig:hopper_geometry}. We anticipate that increasing the KL penalty and collecting a more diverse set of policies to cover the behavioral landscape would result in a more compact latent space.

\begin{figure}[h]    
    \centering
    \begin{subfigure}[b]{0.48\textwidth}
        \includegraphics[width=\textwidth]{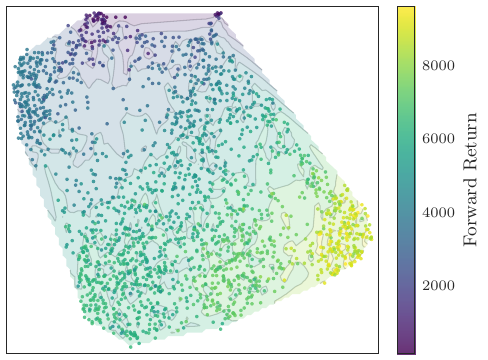}
        \caption{Forward Return Landscape.}
    \end{subfigure}
    \hfill
    \begin{subfigure}[b]{0.48\textwidth}
        \includegraphics[width=\textwidth]{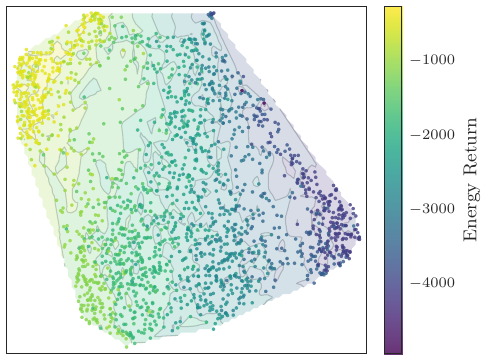}
        \caption{Energy Return Landscape.}
    \end{subfigure}
    \caption{\textbf{Geometry of policy representations for MO-HalfCheetah-v5}}
    \label{fig:cheetah_geometry}
\end{figure}

\begin{figure}[!h]    
    \centering
    \begin{subfigure}[b]{0.48\textwidth}
        \includegraphics[width=\textwidth]{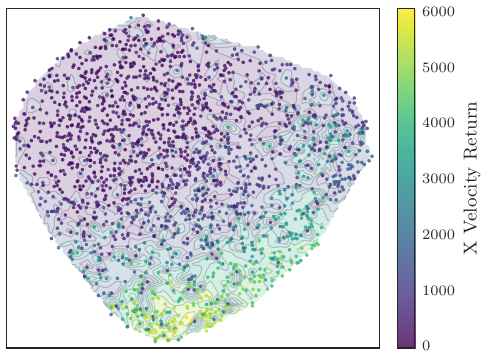}
        \caption{X Velocity Return Landscape.}
    \end{subfigure}
    \hfill
    \begin{subfigure}[b]{0.48\textwidth}
        \includegraphics[width=\textwidth]{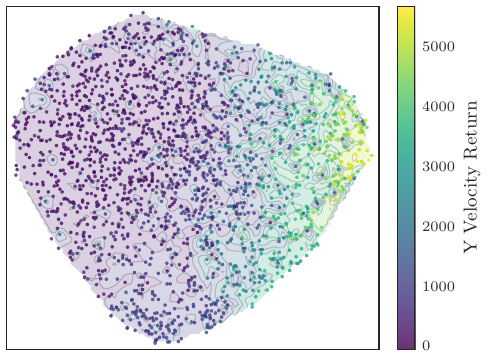}
        \caption{Y Velocity Return Landscape.}
    \end{subfigure}
    \caption{\textbf{Geometry of policy representations for MO-Ant-2obj-v5}}
    \label{fig:ant2obj_geometry}
\end{figure}

\begin{figure}[!h]    
    \centering
    \begin{subfigure}[b]{0.48\textwidth}
        \includegraphics[width=\textwidth]{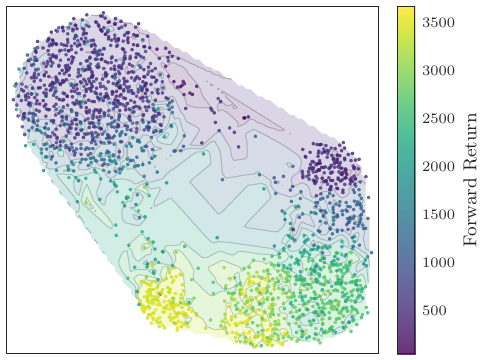}
        \caption{Forward Return Landscape.}
    \end{subfigure}
    \hfill
    \begin{subfigure}[b]{0.48\textwidth}
        \includegraphics[width=\textwidth]{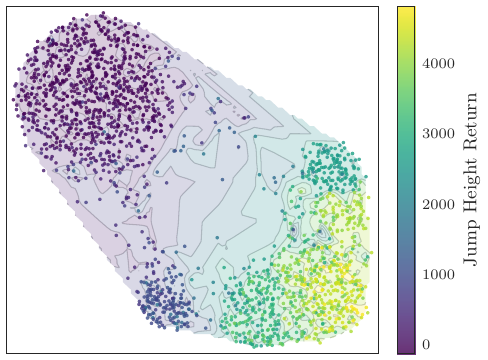}
        \caption{Jump Height Return Landscape.}
    \end{subfigure}
    \caption{\textbf{Geometry of policy representations for MO-Hopper-2obj-v5}}
    \label{fig:hopper2obj_geometry}
\end{figure}

\begin{figure}[h]    
    \centering
    \begin{subfigure}[b]{0.48\textwidth}
        \includegraphics[width=\textwidth]{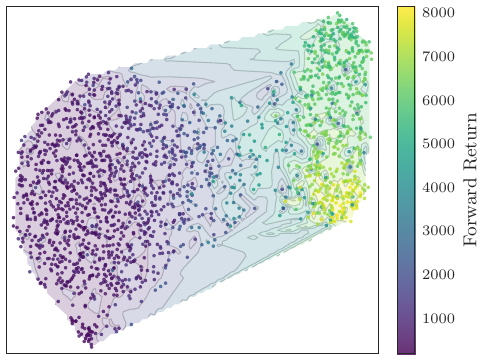}
        \caption{Forward Return Landscape.}
    \end{subfigure}
    \hfill
    \begin{subfigure}[b]{0.48\textwidth}
        \includegraphics[width=\textwidth]{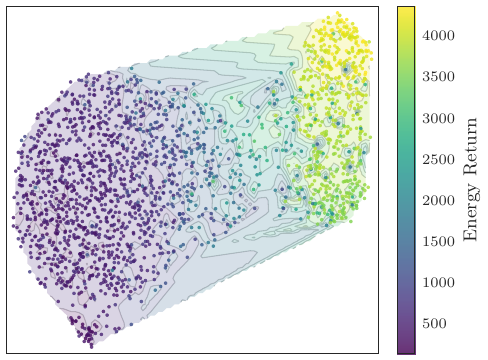}
        \caption{Energy Return Landscape.}
    \end{subfigure}
    \caption{\textbf{Geometry of policy representations for MO-Humanoid-v5}}
    \label{fig:humanoid_geometry}
\end{figure}


\begin{figure}[h]    
    \centering
    \begin{subfigure}[b]{0.32\textwidth}
        \includegraphics[width=\textwidth]{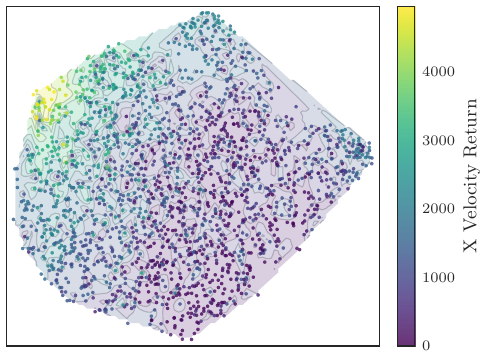}
        \caption{X Velocity Return Landscape.}
    \end{subfigure}
    \hfill
    \begin{subfigure}[b]{0.32\textwidth}
        \includegraphics[width=\textwidth]{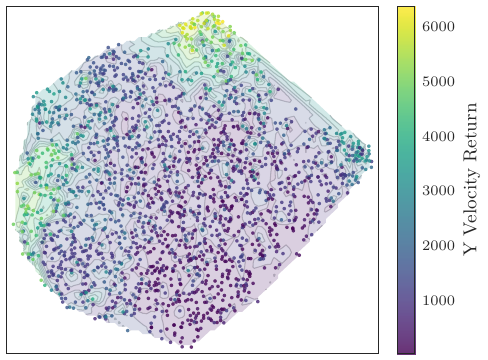}
        \caption{Y Velocity Return Landscape.}
    \end{subfigure}
    \hfill
    \begin{subfigure}[b]{0.32\textwidth}
        \includegraphics[width=\textwidth]{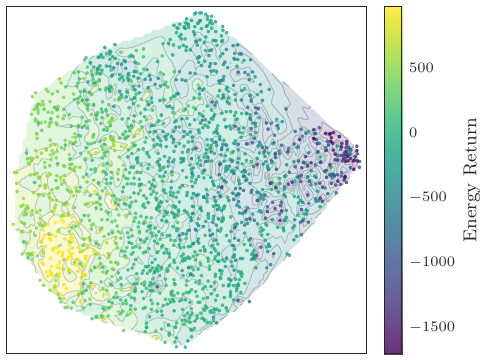}
        \caption{Energy Return Landscape.}
    \end{subfigure}
    \caption{\textbf{Geometry of policy representations for MO-Ant-v5}}
    \label{fig:ant_geometry}
\end{figure}

\begin{figure}[H]    
    \centering
    \begin{subfigure}[b]{0.32\textwidth}
        \includegraphics[width=\textwidth]{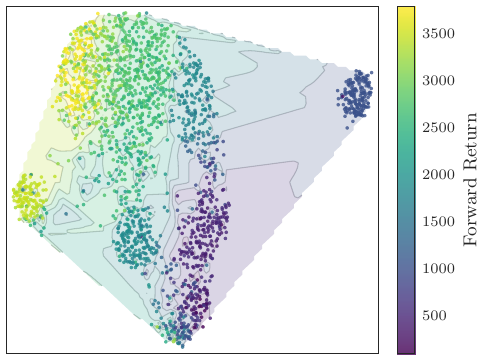}
        \caption{Forward Return Landscape.}
    \end{subfigure}
    \hfill
    \begin{subfigure}[b]{0.32\textwidth}
        \includegraphics[width=\textwidth]{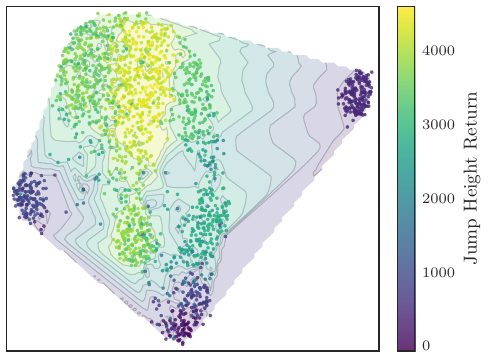}
        \caption{Jump Height Return Landscape.}
    \end{subfigure}
    \hfill
    \begin{subfigure}[b]{0.32\textwidth}
        \includegraphics[width=\textwidth]{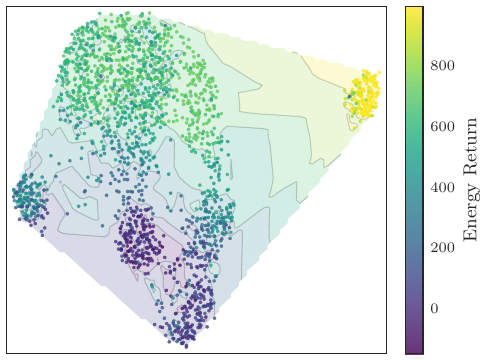}
        \caption{Energy Return Landscape.}
    \end{subfigure}
    \caption{\textbf{Geometry of policy representations for MO-Hopper-v5}}
    \label{fig:hopper_geometry}
\end{figure}

\FloatBarrier
\subsection{Additional Qualitative Results for Behavior Synthesis with Constraint}
We provide two additional qualitative results for steering in HalfCheetah. Figure~\ref{fig:steering_appendix1} illustrates a trajectory initialized at $v_{\text{init}}^{(1)} \approx 2000$ and $v_{\text{init}}^{(2)} \approx -2200$, which traverses exactly along the constraint boundary to maximize forward return while maintaining feasibility, with $v_g^{(1)} = 7000$ and $v_c^{(2)} = -2200$. Figure~\ref{fig:steering_appendix2} shows an initialization at an aggressive policy with $v_{\text{init}}^{(1)} \approx 8000$ and $v_{\text{init}}^{(2)} \approx -4000$, which traverses across the manifold to reach a conservative target. with $v_g^{(1)} = 4500$ and $v_c^{(2)} = -500$

\begin{figure}[H]
    \centering
    \begin{subfigure}[b]{0.48\textwidth}
        \includegraphics[width=\textwidth]{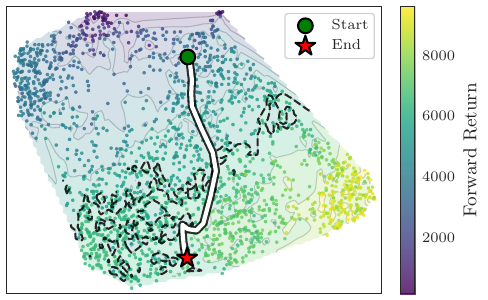}
    \end{subfigure}
    \hfill
    \begin{subfigure}[b]{0.48\textwidth}
        \includegraphics[width=\textwidth]{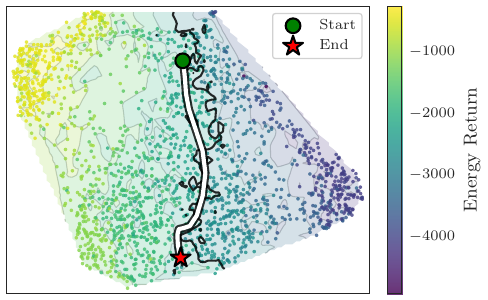}
    \end{subfigure}

    \caption{(a) Trajectory progresses the manifold toward the iso-contour corresponding to the target return $v_g^{(1)}=7000$ (dotted line). (b) Trajectory satisfies energy constraint $v^{(2)} \ge v_c^{(2)} = -2200$, traversing along the feasible boundary (solid line) via primal-dual updates.}
    \label{fig:steering_appendix1}
\end{figure}
\begin{figure}[H]
    \centering
    \begin{subfigure}[b]{0.48\textwidth}
        \includegraphics[width=\textwidth]{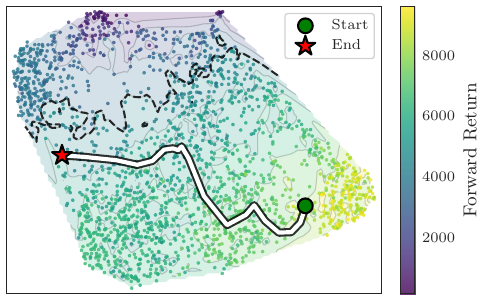}
    \end{subfigure}
    \hfill
    \begin{subfigure}[b]{0.48\textwidth}
        \includegraphics[width=\textwidth]{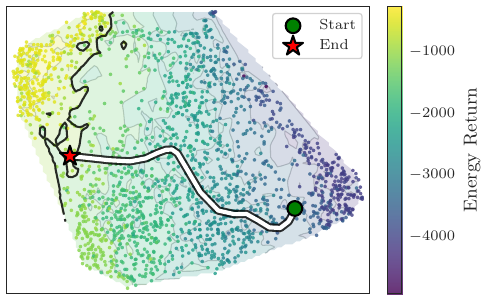}
    \end{subfigure}

    \caption{(a) Trajectory progresses the manifold toward the iso-contour corresponding to the target return $v_g^{(1)}=4500$ (dotted line). (b) Trajectory satisfies energy constraint $v^{(2)} \ge v_c^{(2)} = -500$.}
    \label{fig:steering_appendix2}
\end{figure}

\section{Definitions}

\subsection{Control functional integration.}
\label{app:control-functional}

To approximate the policy representation
\(
h_\pi = \mathbb{E}_{d_\pi}[f(s,a)],
\)
we adopt the control functional framework of \citet{oates2017control}, which addresses the general problem of estimating expectations of the form
\begin{equation}
\mu(f) := \int_{\Omega} f(x)\, p(x)\, \mathrm{d}x,
\label{eq:oates_target}
\end{equation}
where $\Omega \subset \mathbb{R}^d$ is a compact domain, $p$ is a probability density on $\Omega$, and $f$ is a test function.
In our setting, $x=(s,a)$ and $p=d_\pi$.
The control functional approach improves upon standard Monte Carlo estimation by constructing an RKHS surrogate $\hat f$ of $f$ whose expectation under $p$ is available in closed form.
The estimator is obtained by decomposing
\begin{equation}
f(x)
=
\hat f(x)
+
\bigl(f(x) - \hat f(x)\bigr),
\label{eq:cf_decomposition}
\end{equation}
where $\hat f$ is the RKHS control functional.
Following \citet[Sec.~2.1.1]{oates2017control}, the sample set is partitioned to ensure that the surrogate $\hat f$ is constructed independently of the samples used to estimate the residual term.
Specifically, we partition
\(
\mathcal D_0 = \{x_i\}_{i=1}^m
\)
and
\(
\mathcal D_1 = \{x_i\}_{i=m+1}^N.
\)
The control functional $\hat f$ is obtained as the regularised least--squares solution
\begin{equation}
\arg\min_{g \in \mathcal H_\kappa}
\sum_{i=1}^m \| f(x_i) - g(x_i) \|^2
+
\lambda \| g \|_{\mathcal H_\kappa}^2.
\label{eq:cf_ridge}
\end{equation}

We define
\[
f_0 := \bigl[f(x_1),\ldots,f(x_m)\bigr]^\top,
\qquad
f_1 := \bigl[f(x_{m+1}),\ldots,f(x_N)\bigr]^\top,
\qquad
\mathbf 1 := [1,\ldots,1]^\top.
\]

Let \(K_0 \in \mathbb R^{m\times m}\) be the kernel matrix with entries
\(
(K_0)_{i,j} := k_0(x_i,x_j),
\)
and let \(K_{1,0}\in\mathbb R^{(N-m)\times m}\) be the cross-kernel matrix with entries
\(
(K_{1,0})_{i,j} := k_0(x_{m+i},x_j),
\)
where $k_0$ denotes the Stein control functional kernel associated with the base kernel $\kappa$
and the density $p$ \citep[Sec.~2.1.1]{oates2017control}.
Then Lemma~3 in \citet{oates2017control} gives the following closed-form expression for the
control functional estimator
\begin{equation}
\hat\mu(\mathcal D_0,\mathcal D_1; f)
=
\frac{1}{N-m}\,\mathbf 1^\top\!\bigl(f_1 - \hat f_1\bigr)
\;+\;
\frac{\mathbf 1^\top (K_0+\lambda m I)^{-1} f_0}{\mathbf 1^\top (K_0+\lambda m I)^{-1}\mathbf 1},
\label{eq:oates_lemma3_adapted}
\end{equation}
where
\begin{equation}
\hat f_1
:=
K_{1,0}(K_0+\lambda m I)^{-1} f_0
+
\Bigl(\mathbf 1 - K_{1,0}(K_0+\lambda m I)^{-1}\mathbf 1\Bigr)
\,
\frac{\mathbf 1^\top (K_0+\lambda m I)^{-1} f_0}{\mathbf 1^\top (K_0+\lambda m I)^{-1}\mathbf 1}.
\label{eq:oates_fhat1_adapted}
\end{equation}

As noted in \citet[Remark~5]{oates2017control}, the estimator
\(\hat\mu(\mathcal D_0,\mathcal D_1; f)\) is a weighted combination of the function values
\(
[f_0^\top,f_1^\top]^\top
\),
with weights that sum to one and are independent of the particular test function $f$.

\subsection{RNC $\delta$ ordering}
\label{app:delta_ordering}

We recall the formal notion of $\delta$-ordered feature embeddings, as used in the theoretical analysis of RNC \citep{zha2023rank}, which characterizes how continuous target values induce a structured ordering in representation space.
Let $\mathcal{Z}$ denote the input space, and let $R$ be a continuous scalar target. 
Given a finite dataset $\{z_i\}_{i=1}^I$, with associated $\{R_i\}_{i=1}^I$. 
For any two samples $z_i,z_j\in\mathcal{Z}$, define the target distance
\[
d_{i,j} := |R_i - R_j|.
\]

Let's also define the similarity score
\[
s_{i,j} := -\|z_i - z_j\|_2.
\]

Given $\delta>0$, the representations $\{z_i\}_{i=1}^I$ are said to be $\delta$-ordered if, for all $k$, any $i\in[I]$, and any $j,\ell\in[I]\setminus\{i\}$, the following conditions hold
\[
\begin{cases}
s_{i,j} > s_{i,\ell} + \dfrac{1}{\delta}
& \text{if } d_{i,j} < d_{i,\ell}, \\[8pt]
\bigl| s_{i,j} - s_{i,\ell} \bigr| < \delta
& \text{if } d_{i,j} = d_{i,\ell}, \\[8pt]
s_{i,j} < s_{i,\ell} - \dfrac{1}{\delta}
& \text{if } d_{i,j} > d_{i,\ell}.
\end{cases}
\]

That is, for a fixed anchor $z_i$, samples that are closer in target space have strictly higher similarity to the anchor than samples that are farther away, with a margin controlled by $\delta$. When two samples are equidistant in target space, their similarities to the anchor are required to be approximately equal, up to tolerance $\delta$.

This notion enforces a relative geometric ordering in representation space that mirrors the ordering induced by the target values, without requiring similarities to be proportional to target differences. Optimizing the RNC objective encourages representations that satisfy this $\delta$-ordering property \citep[Theorem 3]{zha2023rank}.


\end{document}